\def\@fnsymbol#1{\ensuremath{\ifcase#1\or \dagger\or \ddagger\or
   \mathsection\or \mathparagraph\or \|\or **\or \dagger\dagger
   \or \ddagger\ddagger \else\@ctrerr\fi}}
\newtheorem{proposition}{Proposition}
\newtheorem{assumption}{Assumption}
\newtheorem{lemma}{Lemma}
\def\bff{\mathbf{f}}
\def\bfh{\mathbf{h}}
\def\bfs{\mathbf{s}}
\def\bfw{\mathbf{w}}
\def\bfx{\mathbf{x}}
\def\bfy{\mathbf{y}}
\def\bfz{\mathbf{z}}
\def\bfD{\mathbf{D}}
\def\bfI{\mathbf{I}}
\def\bftheta{\boldsymbol{\theta}}
\def\bfeps{\boldsymbol{\epsilon}}
\def\bfomega{\boldsymbol{\omega}}
\def\nablax{\nabla_{\bfx}}
\def\rmd{\mathrm{d}}
\def\rmdx{\mathrm{d}\bfx}
\def\bbE{\mathbb{E}}
\def\bbR{\mathbb{R}}
\def\bbP{\mathbb{P}}
\def\calN{\mathcal{N}}
\def\eqref#1{Eq.~(\ref{#1})}
\definecolor{cvprblue}{rgb}{0.21,0.49,0.74}
\title{Fast ODE-based Sampling for Diffusion Models in Around 5 Steps}
\author{
    Zhenyu Zhou\qquad Defang Chen\thanks{Corresponding author}\qquad Can Wang\qquad Chun Chen\\
    The State Key Laboratory of Blockchain and Data Security, \\ Zhejiang University, China\\
    {\tt\small \{zhyzhou, defchern, wcan, chenc\}@zju.edu.cn}
}
\begin{document}
\maketitle
\begin{abstract}
Sampling from diffusion models can be treated as solving the corresponding ordinary differential equations (ODEs), with the aim of obtaining an accurate solution with as few number of function evaluations (NFE) as possible. 
Recently, various fast samplers utilizing higher-order ODE solvers have emerged and achieved better performance than the initial first-order one.
However, these numerical methods inherently result in certain approximation errors, which significantly degrades sample quality with extremely small NFE (\textit{e.g.}, around 5).
In contrast, based on the geometric observation that each sampling trajectory almost lies in a two-dimensional subspace embedded in the ambient space, we propose \textbf{A}pproximate \textbf{ME}an-\textbf{D}irection Solver (AMED-Solver) that eliminates truncation errors by directly learning the mean direction for fast diffusion sampling. 
Besides, our method can be easily used as a plugin to further improve existing ODE-based samplers. Extensive experiments on image synthesis with the resolution ranging from 32 to 512 demonstrate the effectiveness of our method. With only 5 NFE, we achieve 6.61 FID on CIFAR-10, 10.74 FID on ImageNet 64$\times$64, and 13.20 FID on LSUN Bedroom. Our code is available at \url{https://github.com/zju-pi/diff-sampler}.

\end{abstract}    
\section{Introduction}
\label{sec:intro}
Diffusion models have been attracting growing attentions in recent years due to their impressive generative capability~\cite{dhariwal2021diffusion,rombach2022ldm,saharia2022photorealistic,ruiz2023dreambooth}. Given a noise input, they 
are able to generate a realistic output by performing iterative denoising steps with the score function~\cite{sohl2015deep,ho2020ddpm,song2021sde}. This process can be interpreted as applying a certain numerical discretization on a stochastic differential equation (SDE), or more commonly, its corresponding probability flow ordinary differential equation (PF-ODE)~\cite{song2021sde}. Comparing to other generative models such as GANs~\cite{goodfellow2014generative} and VAEs~\cite{kingma2013auto}, diffusion models have the advantages in high sample quality and stable training, but suffer from slow sampling speed, which poses a great challenge to their applications.
\begin{figure}
  \centering
    \includegraphics[width=\linewidth]{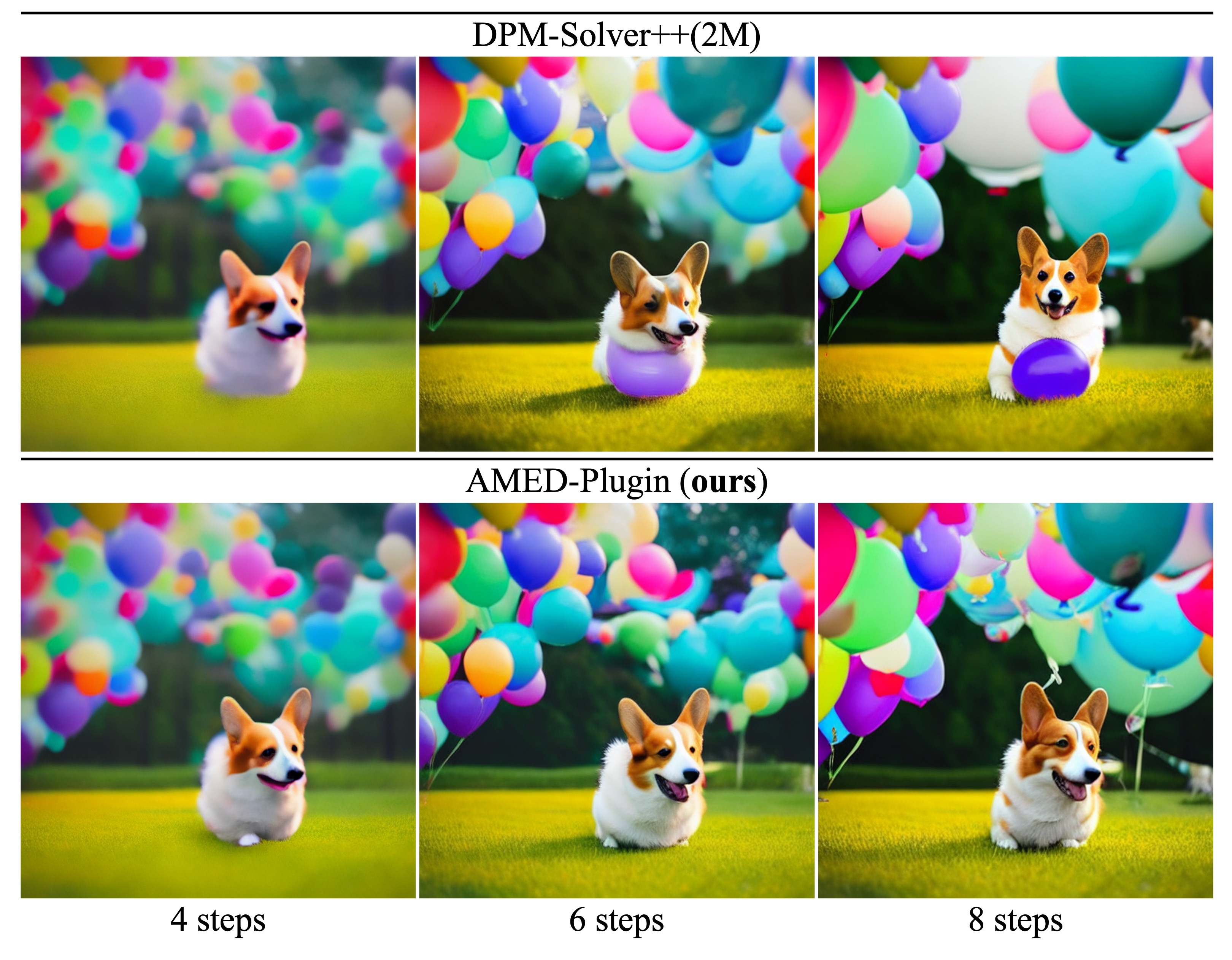}
    \caption{Synthesized images by Stable-Diffusion~\cite{rombach2022ldm} with a default classifier-free guidance scale 7.5 and a text prompt ``\textit{A Corgi on the grass surrounded by a cluster of colorful balloons}''. Our method improves DPM-Solver++(2M)~\cite{lu2022dpmpp} in sample quality.}
    \label{fig:teaser}
\end{figure}

Existing methods for accelerating diffusion sampling fall into two main streams. One is designing faster numerical solvers to increase step size while maintaining small truncation errors~\cite{song2021ddim,liu2022pseudo,lu2022dpm,zhang2023deis,karras2022edm,dockhorn2022genie}. They can be further categorized as \textit{single-step} solvers and \textit{multi-step} solvers~\cite{atkinson2011numerical}. The former computes the next step solution only using information from the current time step, while the latter uses multiple past time steps. These methods have successfully reduced the number of function evaluations (NFE) from 1000 to less than 20, almost without affecting the sample quality. 
\begin{figure*}[t]
    \centering
    \begin{subfigure}[b]{0.24\textwidth}
        \includegraphics[width=\textwidth]{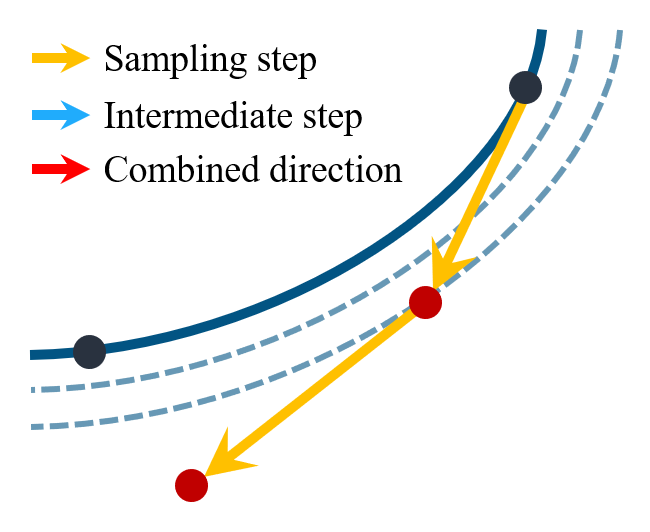}
        \caption{DDIM solver.}
    \end{subfigure}
    \begin{subfigure}[b]{0.24\textwidth}
        \includegraphics[width=\textwidth]{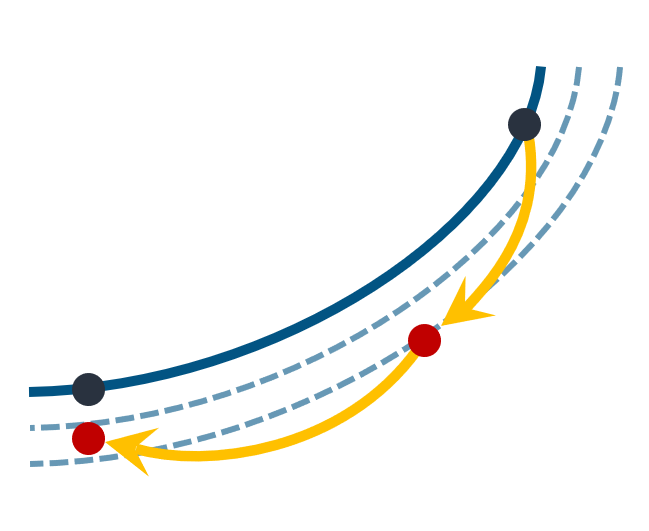}
        \caption{Multi-step solvers.}
    \end{subfigure}
    \begin{subfigure}[b]{0.24\textwidth}
        \includegraphics[width=\textwidth]{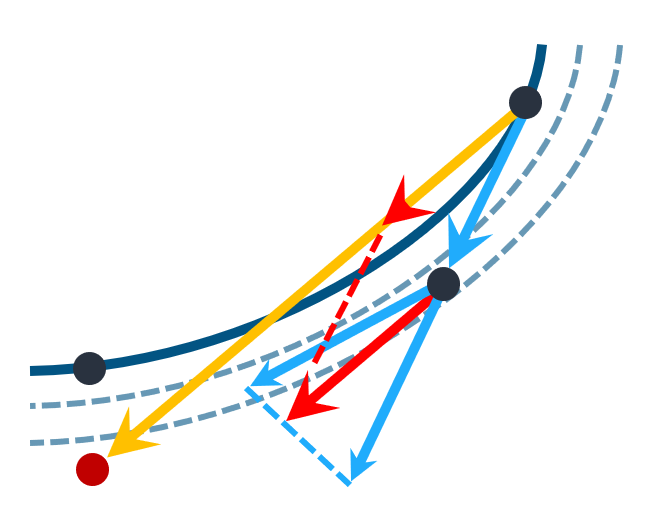}
        \caption{Generalized DPM-Solver-2.}
    \end{subfigure}
    \begin{subfigure}[b]{0.24\textwidth}
        \includegraphics[width=\textwidth]{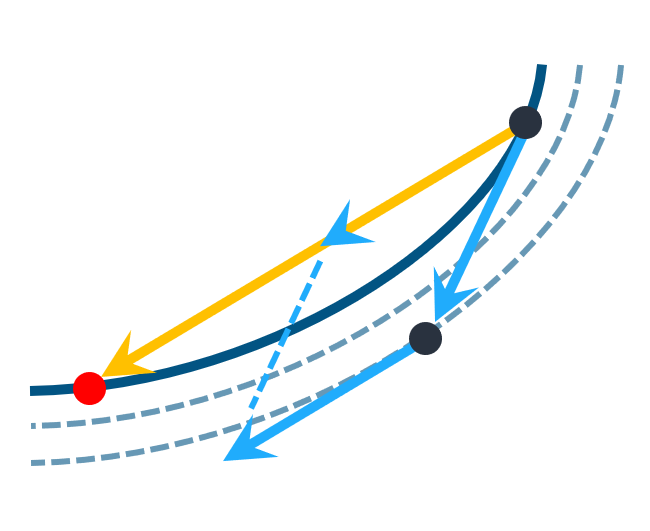}
        \caption{AMED-Solver (ours).}
    \end{subfigure}
    \caption{Comparison of various ODE solvers. Red dots depict the actual sampling step of different solvers. 
    (a) DDIM solver~\cite{song2021ddim} applies Euler discretization on PF-ODEs. In every sampling step, it follows the gradient direction to give the solution for next time step. 
    (b) Multi-step solvers~\cite{liu2022pseudo,zhang2023deis,lu2022dpmpp,zhao2023unipc} require current gradient and several records of history gradients and then follow the combination of these gradients to give the solution. 
    (c) In generalized DPM-Solver-2~\cite{lu2022dpm}, there is a hyper-parameter $r$ controlling the location of intermediate time step. $r=0.5$ recovers the default DPM-Solver-2 and $r=1$ recovers Heun's second method~\cite{karras2022edm}. 
    The gradient for sampling step is given by the combination of gradients at intermediate and current time steps (see \cref{tab:comparison}). 
    (d) Our proposed AMED-Solver seeks to find the intermediate time step and the scaling factor that gives nearly optimal gradient directing to the ground truth solution. This gradient used for sampling step is adaptively learned instead of the heuristic assigned as DPM-Solver-2.}
    \label{fig:method_comparison}
\end{figure*}
Another kind of methods aim to build a one-to-one mapping between the data distribution and the pre-specified noise distribution~\cite{luhman2021knowledge,salimans2022progressive,liu2022flow,song2023consistency,berthelot2023tract}, based on the idea of knowledge distillation. With a well-trained student model in hand, high-quality generation can be achieved with only one NFE. However, training such a student model either requires pre-generation of millions of images~\cite{luhman2021knowledge,liu2022flow}, or huge training cost with carefully modified training procedure~\cite{salimans2022progressive,song2023consistency, berthelot2023tract}. Besides, distillation-based models cannot guarantee the increase of sample quality given more NFE and they have difficulty in likelihood evaluation.

In this paper, we further boost ODE-based sampling for diffusion models in around 5 steps. Based on the geometric property that each sampling trajectory approximately lies in a two-dimensional subspace embedded in the high-dimension space, we propose \textit{\textbf{A}pproximate \textbf{ME}an-\textbf{D}irection Solver} (AMED-Solver), a single-step ODE solver that learns to predict the mean direction
in every sampling step. 
A comparison of various ODE solvers is illustrated in \cref{fig:method_comparison}. We also extend our method to any ODE solvers as a plugin.
When applying AMED-Plugin on the improved PNDM solver~\cite{zhang2023deis}, we achieve FID of 6.61 on CIFAR-10, 10.74 on ImageNet 64$\times$64, and 13.20 on LSUN Bedroom. 
Our main contributions are as follows:
\begin{itemize}[leftmargin=2em]
    \item We introduce AMED-Solver, a new single-step ODE solver for diffusion models that eliminates truncation errors by design.
    \item We propose AMED-Plugin that can be applied to any ODE solvers with a small training overhead and a negligible sampling overhead.
    \item Extensive experiments on various datasets validate the effectiveness of our method in fast image generation. 
\end{itemize}

\section{Background}
\label{sec:background}

\subsection{Diffusion Models}
\label{subsec:diffusion models}
The forward diffusion process can be formalized as a SDE:
\begin{equation}
    \label{eq:forward_sde}
    \rmdx = \bff(\bfx, t)\rmd t + g(t) \rmd \bfw_t,
\end{equation}
where $\bff(\cdot, t): \bbR^d \rightarrow \bbR^d, g(\cdot): \bbR \rightarrow \bbR$ are drift and diffusion coefficients, respectively, and $\bfw_t \in \bbR^d$ is the standard Wiener process~\cite{oksendal2013stochastic}. 
This forward process forms a continuous stochastic process $\{\bfx_t\}_{t=0}^T$ and the associated probability density $\{p_t(\bfx)\}_{t=0}^T$, to make the sample $\bfx_0$ from the implicit data distribution $p_d=p_0$ approximately distribute as the pre-specified noise distribution, \ie, $p_T\approx p_n$. 
Given an encoding $\bfx_T \sim p_n$, generation is then performed with the reversal of \cref{eq:forward_sde}~\cite{feller1949theory,anderson1982reverse}. Remarkably, there exists a probability flow ODE (PF-ODE) 
\begin{equation}
    \label{eq:pf_ode}
    \rmdx = \left[\bff(\bfx, t) - \frac{1}{2} g(t)^2 \nablax \log p_t(\bfx)\right]\rmd t, 
\end{equation}
sharing the same marginals with the reverse SDE~\cite{maoutsa2020interacting,song2021sde}, and $\nablax \log p_t(\bfx)$ is known as the \textit{score function}~\cite{hyvarinen2005estimation,lyu2009interpretation}. Generally, this PF-ODE is preferred in practice for its conceptual simplicity, sampling efficiency and unique encoding~\cite{song2021sde}. Throughout this paper, we follow the configuration of EDM~\cite{karras2022edm} by setting $\bff(\bfx, t)=\mathbf{0}$, $g(t)=\sqrt{2t}$ and $\sigma(t)=t$. In this case, the reciprocal of $t^2$ equals to the \textit{signal-to-noise ratio}~\cite{kingma2021vdm} and the perturbation kernel is
\begin{equation}
    \label{eq:perturbation_kernel}
    p_t(\bfx|\bfx_0)=\calN(\bfx; \bfx_0, t^2\bfI).
\end{equation}
To simulate the PF-ODE, we usually train a U-Net~\cite{ronneberger2015u,ho2020ddpm} predicting $\bfs_{\theta}(\bfx, t)$ to approximate the intractable $\nablax \log p_t(\bfx)$.
There are mainly two parameterizations in the literature. One uses a noise prediction model $\bfeps_{\theta}(\bfx, t)$ predicting the Gaussian noise added to $\bfx$ at time $t$~\cite{ho2020ddpm,song2021ddim}, and another uses a data prediction model $\bfD_{\theta}(\bfx, t)$ predicting the denoising output of $\bfx$ from time $t$ to $0$~\cite{karras2022edm,lu2022dpmpp,chen2023geometric}. 
They have the following relationship in our setting:
\begin{equation}
    \label{eq:relationship}
    \bfs_{\theta}(\bfx, t) = -\frac{\bfeps_{\theta}(\bfx, t)}{t} = \frac{\bfD_{\theta}(\bfx, t) - \bfx}{t^2}.
\end{equation}
The training of diffusion models in the noise prediction notation is performed by minimizing a weighted combination of the least squares estimations:
\begin{equation}
    \label{eq:trainning_loss}
    \bbE_{\bfx \sim p_d, \bfeps \sim \calN(\mathbf{0}, \bfI)} \lVert \bfeps_{\bftheta}(\bfx + t\bfeps; t) - \bfeps \rVert^2_2.
\end{equation}
We then plug the learned score function \cref{eq:relationship} into \cref{eq:pf_ode} to obtain a simple formulation for the PF-ODE
\begin{equation}
    \label{eq:plug-in ODE}
    \rmdx = \bfeps_{\theta}(\bfx, t)\rmd t.
\end{equation}
The \textit{sampling trajectory} $\{\bfx_{t_n}\}_{n=1}^N$ is obtained by first draw $\bfx_{t_N} \sim p_n = \calN(\mathbf{0}, T^2\bfI)$ and then solve \cref{eq:plug-in ODE} with $N-1$ steps following time schedule $\Gamma = \{t_1=\epsilon, \cdots, t_N=T\}$.

\subsection{Categorization of Previous Fast ODE Solvers}
\label{subsec:solvers}
To accelerate diffusion sampling, various fast ODE solvers have been proposed, which can be classified into \textit{single-step solvers} or \textit{multi-step solvers}~\cite{atkinson2011numerical}. 
Single-step solvers including DDIM~\cite{song2021ddim}, EDM~\cite{karras2022edm} and DPM-Solver~\cite{lu2022dpm} which only use the information from the current time step to compute the solution for the next time step, while multi-step solvers including PNDM~\cite{liu2022pseudo} and DEIS~\cite{zhang2023deis} utilize multiple past time steps to compute the next time step (see \cref{fig:method_comparison} for an intuitive comparison). We emphasize that one should differ single-step ODE solvers from single-step (NFE=1) distillation-based methods~\cite{luhman2021knowledge,salimans2022progressive,song2023consistency}.

The advantages of single-step methods lie in the easy implementation and the ability to self-start since no history record is required. However, as illustrated in \cref{fig:degradation}, they suffer from fast degradation of sample quality especially when the NFE budget is limited. The reason may be that the actual sampling steps of multi-step solvers are twice as much as those of single-step solvers with the same NFE, enabling them to adjust directions more frequently and flexibly. We will show that our proposed AMED-Solver can largely fix this issue with learned mean directions.

\begin{figure}
    \centering
    \includegraphics[width=\linewidth]{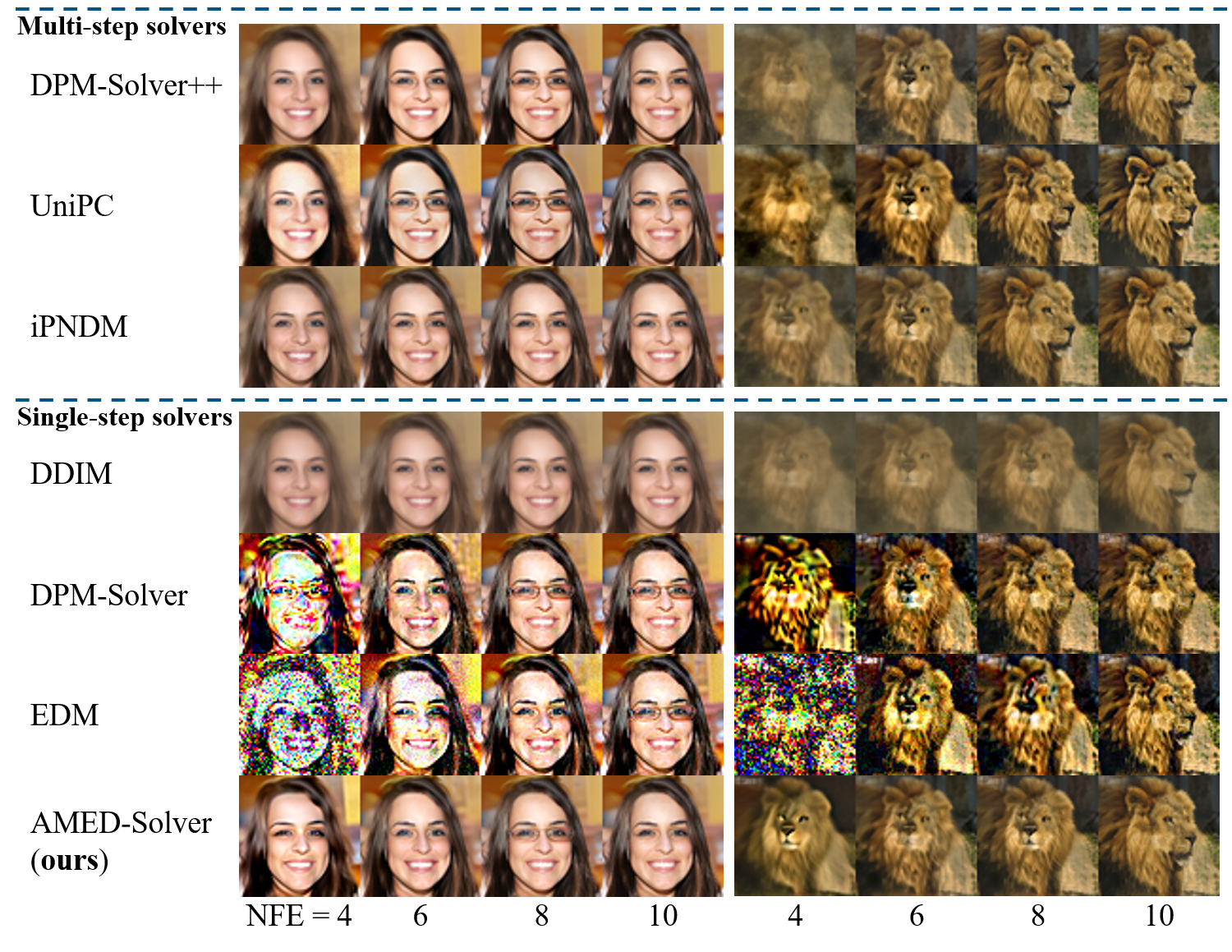}
    \caption{The sample quality degradation of multi-step and single-step ODE solvers. The quality of images generated by single-step solvers, especially higher-order ones including DPM-Solver-2~\cite{lu2022dpm} and EDM~\cite{karras2022edm}, rapidly decreases as NFE decreases, while our proposed AMED-Solver largely mitigates such degradation. Examples are from FFHQ 64$\times$64~\cite{karras2019style} and ImageNet 64$\times$64~\cite{russakovsky2015ImageNet}.}
    \label{fig:degradation}
\end{figure}

\begin{table*}
  \centering
  \begin{tabular}{@{}llll@{}}
    \toprule
    Method & Gradient term & Source of $s_n$ & Source of $c_n$ \\
    \midrule
    DDIM~\cite{song2021ddim} & $c_n\bfeps_{\theta}(\bfx_{t_{n+1}}, t_{n+1})$ & - & $1$ \\
    EDM~\cite{karras2022edm} & $c_n\left(\frac{1}{2} \bfeps_{\theta}(\bfx_{s_n}, s_n) + \frac{1}{2} \bfeps_{\theta}(\bfx_{t_{n+1}}, t_{n+1}) \right)$ & $t_n$ & $1$ \\
    Generalized DPM-Solver-2~\cite{lu2022dpm} & $c_n\left(\frac{1}{2r} \bfeps_{\theta}(\bfx_{s_n}, s_n) + (1-\frac{1}{2r}) \bfeps_{\theta}(\bfx_{t_{n+1}}, t_{n+1})\right)$ & $t_n^r t_{n+1}^{1-r}$, $r\in(0, 1]$ & $1$ \\
    AMED-Solver (ours) & $c_n\bfeps_{\theta}(\bfx_{s_n}, s_n)$ & Learned & Learned \\
    \bottomrule
  \end{tabular}
  \caption{Comparison of various single-step ODE solvers.}
  \label{tab:comparison}
\end{table*}

\section{Our Proposed AMED-Solver}
\label{sec:method}
In this section, we propose AMED-Solver, a single-step ODE solver for diffusion models that releases the potential of single-step solvers in extremely small NFE, enabling them to match or even surpass the performance of multi-step solvers. Furthermore, our proposed method can be generalized as a plugin on any ODE solver, yielding promising improvement across various datasets. Our key observation is that the sampling trajectory generated by \cref{eq:plug-in ODE} nearly lies in a two-dimensional subspace embedded in high-dimensional space, which motivates us to minimize the discretization error with the mean value theorem.

\subsection{The Sampling Trajectory Almost Lies in a Two-Dimensional Subspace}
\label{subsec:geometric}
The sampling trajectory generated by solving \cref{eq:plug-in ODE} exhibits an extremely simple geometric structure and has an implicit connection to annealed mean shift, as revealed in the previous work~\cite{chen2023geometric}. 
Each sample starting from the noise distribution approaches the data manifold along a smooth, quasi-linear sampling trajectory in a monotone likelihood increasing way. 
Besides, all trajectories from different initial points share the similar geometric shape, whether in the conditional or unconditional generation case~\cite{chen2023geometric}.

In this paper, we further point out that the sampling trajectory generated by ODE solvers almost lies in a two-dimensional plane embedded in a high-dimensional space. We experimentally validate this claim by performing Principal Component Analysis (PCA) for 1k sampling trajectories on different datasets including CIFAR10 32$\times$32~\cite{krizhevsky2009learning}, FFHQ 64$\times$64~\cite{karras2019style}, ImageNet 64$\times$64~\cite{russakovsky2015ImageNet} and LSUN Bedroom 256$\times$256~\cite{yu2015lsun}. As illustrated in \cref{fig:pca}, the relative projection error using two principal components is no more than 8$\%$ and always stays in a small level. Besides, the sample variance can be fully explained using only two principal components. Given the vast image space with dimensions of 3072 (3$\times$32$\times$32), 12288 (3$\times$64$\times$64), or 196608 (3$\times$256$\times$256), the sampling trajectories show intriguing property that their dynamics can almost be described using only two principal components.

\begin{figure}[t]
    \centering
    \begin{subfigure}[b]{0.23\textwidth}
        \includegraphics[width=\textwidth]{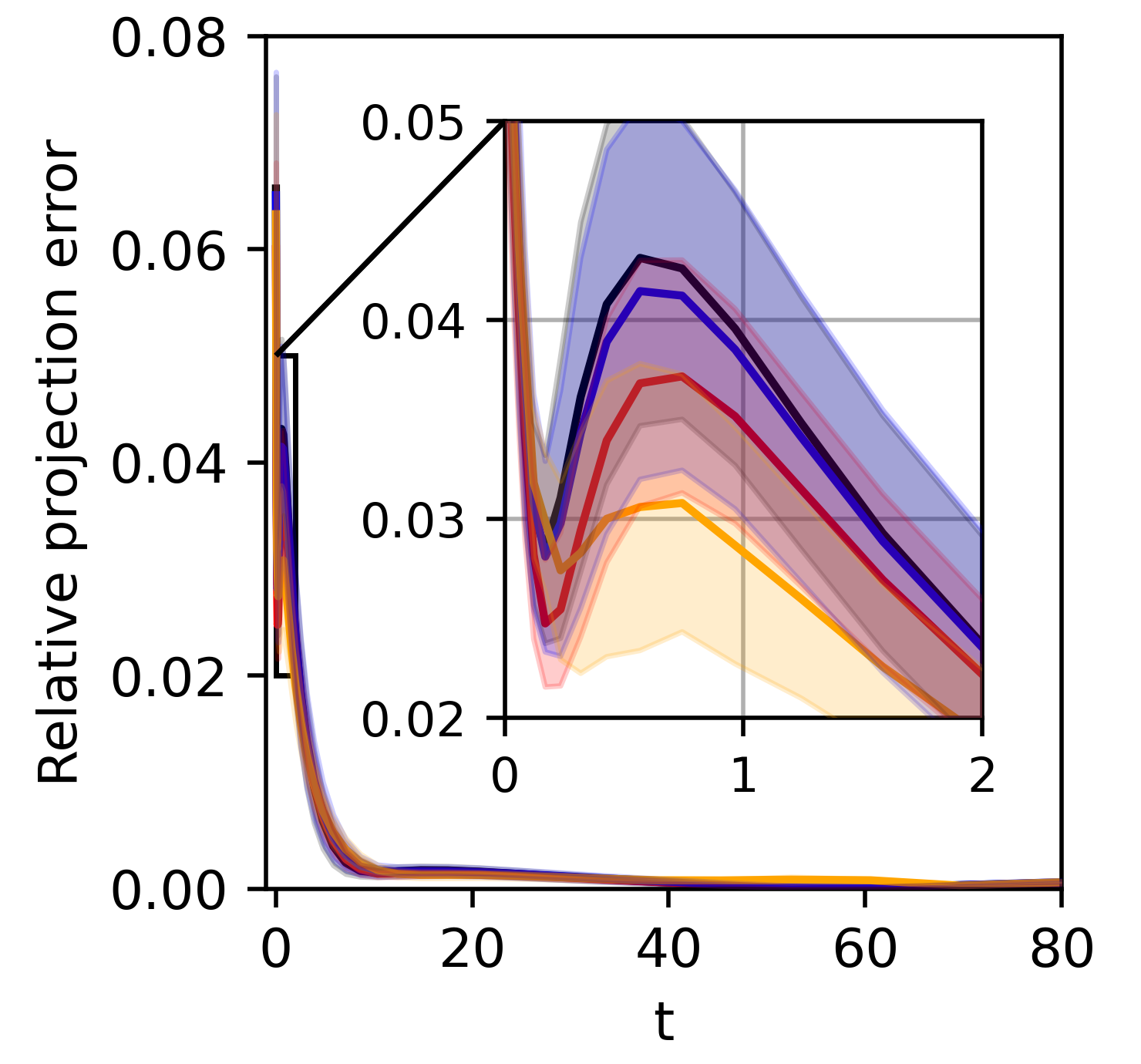}
        \caption{Relative deviation.}
    \end{subfigure}
    \begin{subfigure}[b]{0.23\textwidth}
        \includegraphics[width=\textwidth]{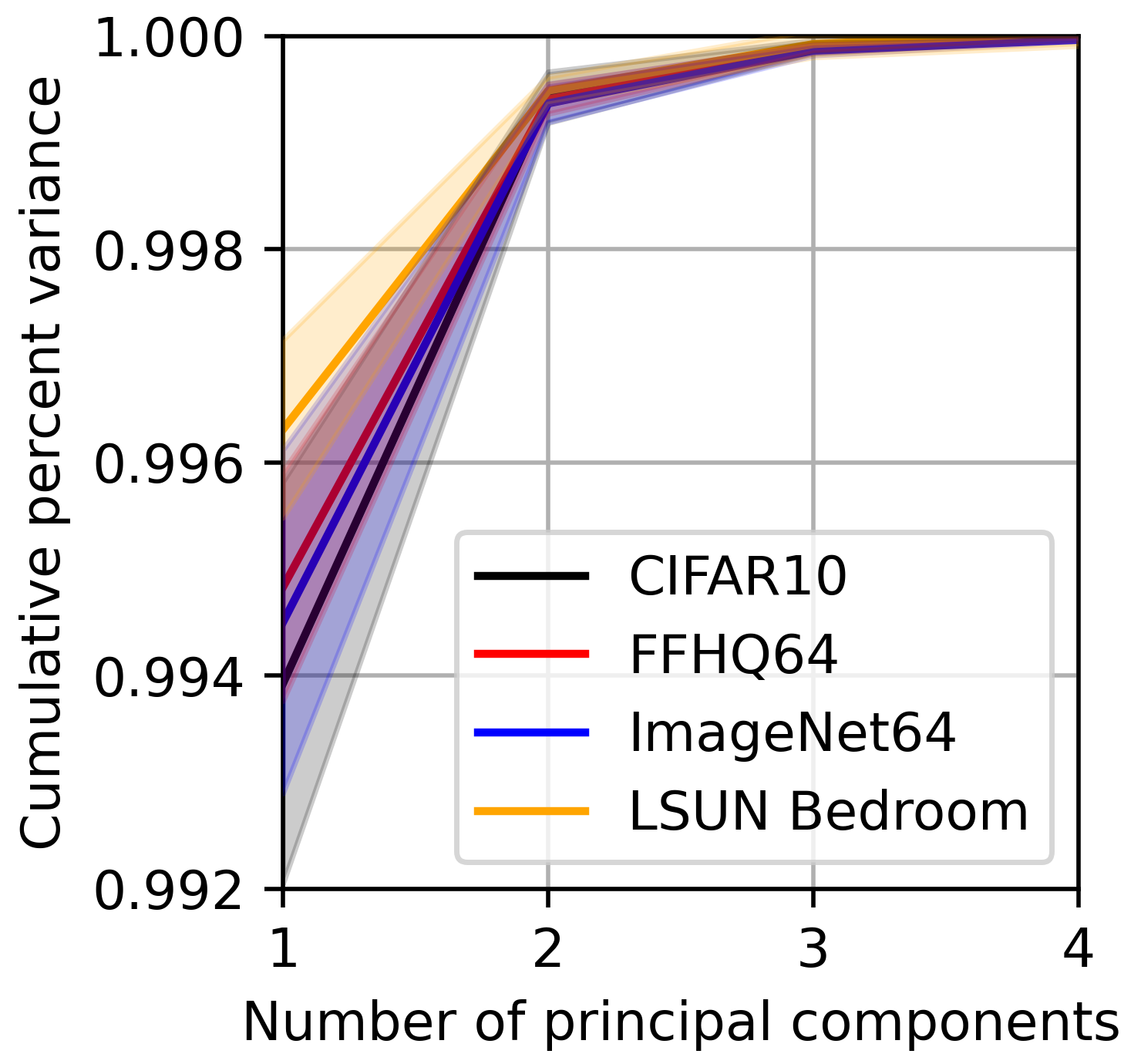}
        \caption{Variance explained by top PCs.}
    \end{subfigure}
    \caption{
    We perform PCA to each sampling trajectory $\{\bfx_{t}\}_{t=\epsilon}^{T}$.
    (a) These trajectories are projected into a 2D subspace spanned by the top 2 principal components to get $\{\tilde{\bfx}_{t}\}_{t=\epsilon}^T$ and the relative projection error is calculated as $\left \| \bfx_{t} - \tilde{\bfx}_{t} \right \|_2 / \left \| \bfx_{t} \right \|_2$. 
    (b) We progressively increase the number of principal components and calculate the cumulative percent variance as $\mathrm{Var}(\{\tilde{\bfx}_{t}\}_{t=\epsilon}^T) / \mathrm{Var}(\{\bfx_{t}\}_{t=\epsilon}^T)$. The results are obtained by averaging 1k sampling trajectories using EDM solver~\cite{karras2022edm} with 80 NFE.
    }
    \label{fig:pca}
\end{figure}

\subsection{Approximate Mean-Direction Solver}
\label{subsec:AMED-Solver}
With the geometric intuition, we next explain our methods in more detail. The exact solution of \cref{eq:plug-in ODE} is:
\begin{equation}
    \label{eq:solution}
    \bfx_{t_n} = \bfx_{t_{n+1}} + \int_{t_{n+1}}^{t_n} \bfeps_{\theta}(\bfx_t, t)\rmd t.
\end{equation}
Various numerical approximations to the integral above correspond to different types of fast ODE-based solvers. For instance, direct explicit rectangle method gives DDIM~\cite{song2021ddim}, linear multi-step method yields PNDM~\cite{liu2022pseudo}, Taylor expansion yields DPM-Solver~\cite{lu2022dpm} and polynomial extrapolation recovers DEIS~\cite{zhang2023deis}. Different from these works, we derive our method more directly by expecting that the \textit{mean value theorem} holds for the integral involved so that we can find an intermediate time step $s_n \in \left(t_n, t_{n+1}\right)$ and a scaling factor $c_n \in \bbR$ that satisfy
\begin{equation}
    \label{eq:approx1}
    \bfeps_{\theta}(\bfx_{s_n}, s_n) = \frac{1}{c_n(t_n - t_{n+1})} \int_{t_{n+1}}^{t_n} \bfeps_{\theta}(\bfx_t, t)\rmd t
\end{equation}
Although the well-known mean value theorem for real-valued functions does not hold in vector-valued case~\cite{cheney2001analysis}, the remarkable geometric property that the sampling trajectory $\{\bfx_{t}\}_{t=\epsilon}^T$ almost lies in a two-dimensional subspace guarantees our use. 
By properly choosing $s_n$ and $c_n$, we are able to achieve an approximation of \cref{eq:solution} by
\begin{equation}
    \label{eq:approx2}
    \bfx_{t_n} \approx \bfx_{t_{n+1}} + c_n (t_n - t_{n+1}) \bfeps_{\theta}(\bfx_{s_n}, s_n).
\end{equation}
This formulation gives a single-step ODE solver. The DPM-Solver-2~\cite{lu2022dpm} can be recovered by setting $s_n = \sqrt{t_n t_{n+1}}$ and $c_n=1$. In \cref{tab:comparison}, we compare various single-step solvers by generalizing $c_n\bfeps_{\theta}(\bfx_{s_n}, s_n)$ as the \textit{gradient term}.

For the choose of $\{s_n\}_{n=1}^{N-1}$ and $\{c_n\}_{n=1}^{N-1}$, we train a shallow neural network $g_{\phi}$ (named as \textit{AMED predictor}) based on distillation with small training and negligible sampling costs. Briefly, given samples $\bfy_{t_n}, \bfy_{t_{n+1}}$ on the teacher sampling trajectory and $\bfx_{t_{n+1}}$ on the student sampling trajectory, $g_{\phi}$ gives $s_n$ and $c_n$ that minimizes $d\left(\bfx_{t_n}, \bfy_{t_n}\right)$ where $\bfx_{t_n}$ is given by \cref{eq:approx2} and $d(\cdot,\cdot)$ is a distance metric. Since we seek to find a mean-direction that best approximates the integral in \cref{eq:approx1}, we name our proposed single-step ODE solver \cref{eq:approx2} as \textit{\textbf{A}pproximate \textbf{ME}an-\textbf{D}irection Solver}, dubbed as AMED-Solver. Before specifying the training and sampling details, we proceed to generalize our idea as a plugin to the existing fast ODE solvers.

\subsection{AMED as A Plugin}
\label{subsec:AMED-Plugin}
The idea of AMED can be used to further improve existing fast ODE solvers for diffusion models.
Throughout our analysis, we take the polynomial schedule~\cite{karras2022edm} as example:
\begin{equation}
    \label{eq:poly_schedule}
    t_n=(t_1^{1/\rho}+\frac{n-1}{N-1}(t_N^{1/\rho}-t_1^{1/\rho}))^{\rho}.
\end{equation}
Note that another usually used uniform logSNR schedule is actually the limit of \cref{eq:poly_schedule} as $\rho$ approaches $+\infty$. 

Given a time schedule $\Gamma = \{t_1=\epsilon, \cdots, t_N=T\}$, the AMED-Solver is obtained by performing extra model evaluations at $s_n \in (t_n, t_{n+1}), n=1,\cdots,N-1$. Under the same manner, we are able to improve any ODE solvers by predicting $\{s_n\}_{n=1}^{N-1}$ and $\{c_n\}_{n=1}^{N-1}$ that best aligns the student and teacher sampling trajectories.

We validate this by an experiment where we fix $c_n=1$ and first generate a ground truth trajectory $\{\bfx_{t_n}^G\}_{n=1}^N$ using Heun's second method with 80 NFE and extract samples at $\Gamma$. For every ODE solver, we generate a baseline trajectory $\{\bfx_{t_n}^B\}_{n=1}^N$ by performing evaluations at $s_n=\sqrt{t_nt_{n+1}}$ as DPM-Solver-2~\cite{lu2022dpm} does.
We then apply a grid search on $r_n$, giving $s_n=t_n^{r_n} t_{n+1}^{1-{r_n}}$ and a searched trajectory $\{\bfx_{t_n}^S\}_{n=1}^N$. We define the relative alignment to be $\left \| \bfx_{t_n}^B - \bfx_{t_n}^G \right \|_2 - \left \| \bfx_{t_n}^S - \bfx_{t_n}^G \right \|_2$. 
Positive relative alignment value indicates that the searched trajectory $\{\bfx_{t_n}^S\}_{n=1}^N$ is closer to the ground truth trajectory $\{\bfx_{t_n}^G\}_{n=1}^N$ than the baseline trajectory $\{\bfx_{t_n}^B\}_{n=1}^N$.
As shown in \cref{fig:align}, the relative alignment value keeps positive in most cases, meaning that appropriate choose of intermediate time steps can further improve fast ODE solvers. Therefore, as described in \cref{subsec:AMED-Solver}, we also train an AMED predictor to predict the intermediate time steps as well as the direction scaling factor. As this process still has the meaning of searching for the direction pointing to the ground truth, we name this method as \textit{AMED-Plugin} and apply it on various fast ODE solvers.

Through \cref{fig:align}, we obtain a direct comparison between DPM-Solver-2 and our proposed AMED-Solver since they share the same baseline trajectory when $r$ is fixed to $0.5$. Our AMED-Solver aligns better with the ground truth and the location of searched intermediate time steps is much more stable than that of DPM-Solver-2. We speculate that the gradient direction of DPM-Solver-2 restricted by the fixed $r$ (see \cref{tab:comparison}) is suboptimal. Instead, 
the learned coefficients provide AMED-Solver more flexibility to determine a better gradient direction.

\begin{figure}[t]
    \centering
    \begin{subfigure}[b]{0.23\textwidth}
        \includegraphics[width=\textwidth]{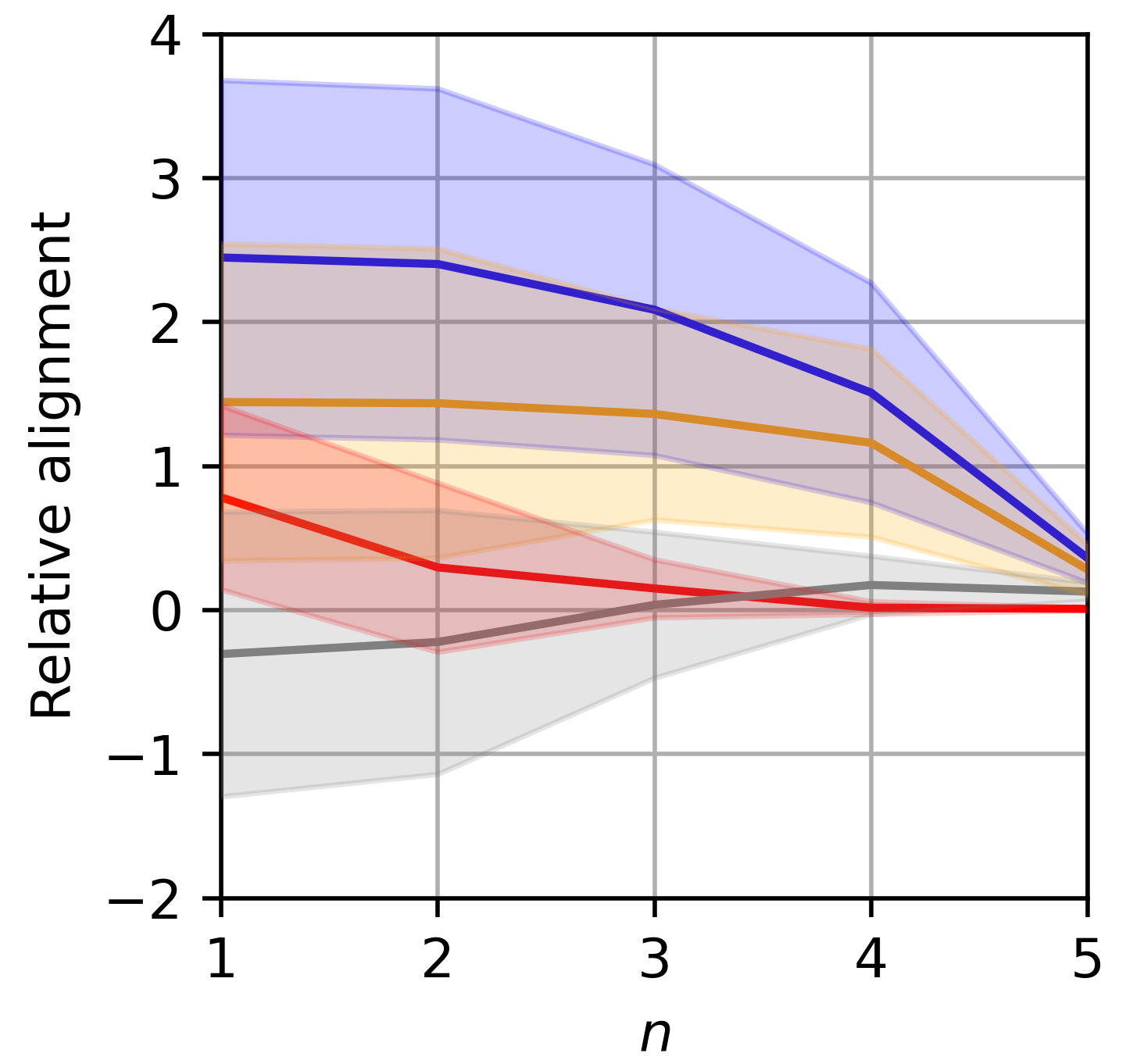}
        \caption{Relative alignment at time $t_n$.}
    \end{subfigure}
    \begin{subfigure}[b]{0.23\textwidth}
        \includegraphics[width=\textwidth]{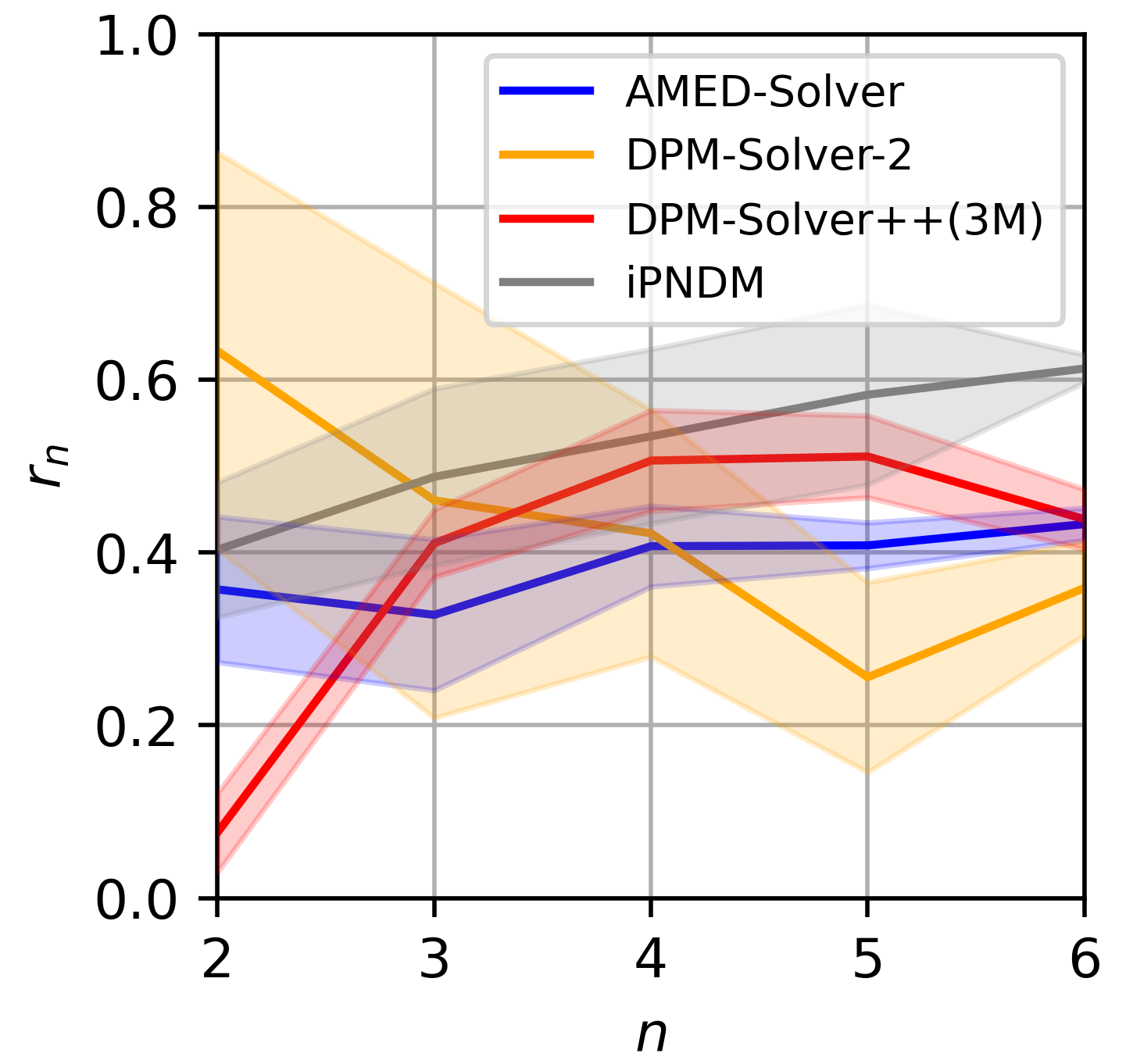}
        \caption{Best $r_n$ given by grid search.}
    \end{subfigure}
    \caption{Effectiveness of searching the intermediate time steps. Given a time schedule $\Gamma = \{t_1=\epsilon, \cdots, t_N=T\}$ where $\epsilon=0.002, T=80, N=6$, we first generate a ground truth trajectory. For each ODE solver, we generate a baseline trajectory by performing evaluations at $s_n=\sqrt{t_nt_{n+1}}$, and a searched trajectory by a greedy grid search on $r_n$ which gives $s_n=t_n^{r_n} t_{n+1}^{1-{r_n}}$. 
    }
    \label{fig:align}
\end{figure}

\subsection{Training and Sampling}
\label{subsec:training_and_sampling}

As samples from different sampling trajectories approach the asymmetric data manifold, their current locations should contribute to the corresponding trajectory curvatures~\cite{chen2023geometric}. To recognize the sample location without extra computation overhead, we extract the bottleneck feature of the pre-trained U-Net model every time after its evaluation. We then take the current and next time step $t_{n+1}$ and $t_n$ along with the bottleneck feature $\bfh_{t_{n+1}}$ as the inputs to the AMED predictor $g_\phi$ to predict the intermediate time step $s_n$ and the scaling factor $c_n$. Formally, we have
\begin{equation}
    \label{eq:network_formulation}
    \{s_n, c_n\} = g_{\phi}(\bfh_{t_{n+1}}, t_{n+1}, t_n).
\end{equation}
The network architecture is shown in \cref{fig:network}.

\begin{figure}
  \centering
    \includegraphics[width=\linewidth]{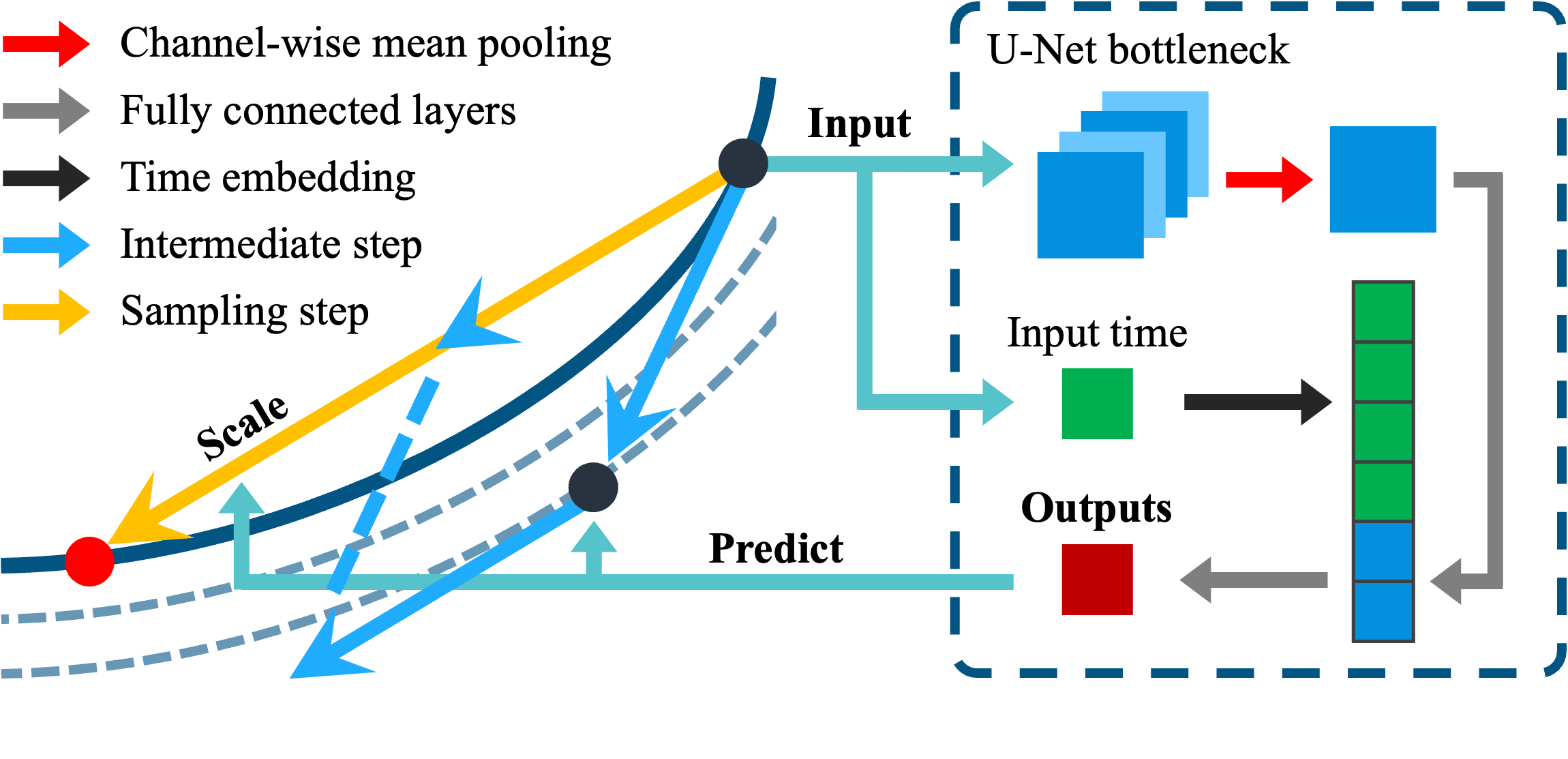}
    \caption{Network Architecture. Given the bottleneck feature extracted by the U-Net model at time $t_{n+1}$, we perform the channel-wise mean pooling and pass it through two fully connected layers. It is concatenated with the time embedding and goes through one extra fully connected layer and a sigmoid function to output $r_n$ and $c_n$. The intermediate time step is then given by $s_n=t_n^{r_n} t_{n+1}^{1-{r_n}}$.}
    \label{fig:network}
\end{figure}

As for sampling from time $t_{n+1}$ to $t_n$, we first perform one U-Net evaluation at $t_{n+1}$ and extract the bottleneck feature $\bfh_{n+1}$ to predict $s_n$ and $c_n$. For AMED-Solver, we obtain $\bfx_{s_n}$ by an Euler step from $t_{n+1}$ to $s_n$ and then use \cref{eq:approx2} to obtain $\bfx_{t_n}$. When applying AMED-Plugin on other ODE solvers, we step from $t_{n+1}$ to $s_n$ and $s_n$ to $t_n$ following the original solver's sampling procedure and $c_n$ is used to scale the direction in the latter step. 
The total NFE is thus $2(N-1)$. We denote such a sampling step from $t_{n+1}$ to $t_n$ by
\begin{equation}
    \label{eq:ode_solver}
    \bfx_{t_n} = \Phi(\bfx_{t_{n+1}}, t_{n+1}, t_{n}, \Lambda_n),
\end{equation}
where $\Lambda_n$ is the set of intermediate time steps $s_n \in (t_n, t_{n+1})$ and scaling factors $c_n$ introduced in this step.

The training of $g_\phi$ is based on knowledge distillation, where the student and teacher sampling trajectories evaluated at $\Gamma$ are required, and they are denoted as $\{\bfx_{t_n}\}_{n=1}^{N}$ and $\{\bfy_{t_n}\}_{n=1}^{N}$, respectively.
We then denote the sampling process that generates student and teacher trajectories by $\Phi_s$ with $\Lambda_{n}^s=\{s_n, c_n\}$ and $\Phi_t$ with $\Lambda_{n}^t$, respectively. 
Since teacher trajectories require more NFE to give reliable reference, we set the intermediate time steps to be an interpolation of $M$ steps between $t_n$ and $t_{n+1}$ following the original time schedule. Taking the polynomial schedule as example, we set $\Lambda_{n}^t = \{s_n^1\, \cdots, s_n^M, c_n^1=1\, \cdots, c_n^M=1\}$, where
\begin{equation}
    \label{eq:poly_schedule_teacher}
    s_n^i=(t_n^{1/\rho}+\frac{i}{M+1}(t_{n+1}^{1/\rho}-t_n^{1/\rho}))^{\rho}.
\end{equation}

We train $g_\phi$ using a distance metric $d(\cdot, \cdot)$ between samples on both trajectories with $\{s_n, c_n\}$ predicted by $g_\phi$:
\begin{equation}
    \label{eq:loss}
    \mathcal{L}_{t_n}(\phi) = d(\Phi_s(\bfx_{t_{n+1}}, t_{n+1}, t_{n}, \{s_n, c_n\}), \bfy_{t_n})
\end{equation}
In one training loop, we first generate a batch of noise images at $t_N$ and the teacher trajectories. We then calculate the loss and update $g_\phi$ progressively from $t_{N-1}$ to $t_1$. Hence, $N-1$ backpropagations are applied in one training loop. Algorithms for training and sampling is provided in \cref{alg:training} and \cref{alg:sampling}.

Similar to the previous discovery that the sampling trajectory is nearly straight when $t$ is large \cite{chen2023geometric}, we notice that the gradient term $\bfeps_{\theta}(\bfx_{t_N}, t_N)$ at time $t_N$ shares almost the same direction as $\bfx_{t_N}$. We thus simply use $\bfx_{t_N}$ as the direction in the first sampling step to save one NFE, which is important when the NFE budget is limited. This trick is called analytical first step (AFS)~\cite{dockhorn2022genie}. We find that the application of AFS yields little degradation or even increase of sample quality for datasets with small resolutions. 

Inspired by the concurrent work~\cite{xia2023towards}, when applying our AMED-Plugin on DDIM~\cite{song2021ddim}, iPNDM~\cite{zhang2023deis} and DPM-Solver++~\cite{lu2022dpmpp} on datasets with small resolution (32$\times$32 and 64$\times$64), we optionally train additional time scaling factors $\{a_n\}_{n=1}^{N-1}$ through $g_\phi$ to expand the solution space. Given $a_n$, we use $\bfeps_{\theta}(\bfx_{s_n}, a_n s_n)$ instead of $\bfeps_{\theta}(\bfx_{s_n}, s_n)$ when stepping to $t_n$, which improves results in some cases.

\begin{algorithm}[t]
\caption{Training of $g_\phi$}
\begin{algorithmic}
  \label{alg:training}
  \STATE \textbf{Input:} Model parameter $\phi$, time schedule $\{t_n\}_{n=1}^N$, ODE solver $\Phi_s$ and $\Phi_t$.
  \REPEAT
    \STATE Sample $\bfx_{t_N} = \bfy_{t_N} \sim \calN(\mathbf{0}, t_N^2\bfI)$
    \STATE Generate a teacher trajectory $\{\bfy_{t_n}\}_{n=1}^{N}$ by $\Phi_t$
    \FOR{$n=N-1$ \textbf{to} $1$}
      \STATE $\bfx_{t_n} \gets \Phi_s(\bfx_{t_{n+1}}, t_{n+1}, t_{n}, g_{\phi}(\bfh_{t_{n+1}}, t_{n+1}, t_n))$
      \STATE \# $\bfh_{t_{n+1}}$ is extracted after U-Net evaluation at $t_{n+1}$.
      \STATE $\mathcal{L}_{t_n}(\phi) = d(\bfx_{t_n}, \bfy_{t_n})$
      \STATE Update the model parameter $\phi$
    \ENDFOR
  \UNTIL convergence
\end{algorithmic}
\end{algorithm}

\begin{algorithm}[t]
\caption{AMED Sampling}
\begin{algorithmic}
  \label{alg:sampling}
  \STATE \textbf{Input:} Trained AMED predictor $g_\phi$, time schedule $\{t_n\}_{n=1}^N$, ODE solver $\Phi$.
  \STATE Sample $\bfx_{t_N} \sim \calN(\mathbf{0}, t_N^2\bfI)$
  \FOR{$n=N-1$ \textbf{to} $1$}
    \STATE $\bfx_{t_n} \gets \Phi(\bfx_{t_{n+1}}, t_{n+1}, t_{n}, g_{\phi}(\bfh_{t_{n+1}}, t_{n+1}, t_n))$
    \STATE \# $\bfh_{t_{n+1}}$ is extracted after U-Net evaluation at $t_{n+1}$.
  \ENDFOR
  \STATE \textbf{Output:} Generated sample $\bfx_{t_1}$
\end{algorithmic}
\end{algorithm}

\subsection{Comparing with Distillation-based Methods}
\label{subsec:comparison_distillation}
Though being a solver-based method, our proposed AMED-Solver shares a similar principal with distillation-based methods. The main difference is that distillation-based methods finally build a mapping from noise to data distribution by fine-tuning the pre-trained model or training a new prediction model from scratch~\cite{luhman2021knowledge,salimans2022progressive,song2023consistency,chen2023geometric}, while our AMED-Solver still follows the nature of solving an ODE, building a probability flow from noise to image. Distillation-based methods have shown impressive results of performing high-quality generation by only one NFE~\cite{song2023consistency}. However, these methods require huge efforts on training. One should carefully design the training details and it takes a large amount of time to train the model (usually several or even tens of GPU days). Moreover, as distillation-based models 
directly build the mapping like typical generative models, they suffer from the inability of interpolating between two disconnected modes~\cite{salmona2022can}. 

Additionally, 
our training goal is to properly predict several parameters in the sampling process instead of directly predicting high-dimensional samples at the next time as those distillation-based methods. Therefore, our architecture is very simple and is easy to train thanks to the geometric property of sampling trajectories. Besides, our AMED-Solver maintains the nature of ODE solver-based methods, and does not suffer from obvious internal imperfection for downstream tasks.
\section{Experiments}
\label{sec:experiment}

\renewcommand{\arraystretch}{0.83}
\begin{table*}[htbp]
  \centering
  \fontsize{8}{10}\selectfont
  \begin{subtable}[b]{0.42\textwidth}
    \centering
    \begin{tabular}{lcccc}
        \toprule
        \multirow{2}{*}{Method} & \multicolumn{4}{c}{NFE} \\
        \cmidrule{2-5}
        & 3 & 5 & 7 & 9 \\
        \midrule
        \fontsize{7}{1}\selectfont \textbf{Multi-step solvers} \\
        \midrule
        DPM-Solver++(3M)~\cite{lu2022dpmpp}         & 110.0 & 24.97 & 6.74 & 3.42 \\
        UniPC~\cite{zhao2023unipc}                  & 109.6 & 23.98 & 5.83 & 3.21 \\
        iPNDM~\cite{liu2022pseudo,zhang2023deis}    & 47.98 & 13.59 & 5.08 & 3.17 \\
        \midrule
        \fontsize{7}{1}\selectfont \textbf{Single-step solvers} \\
        \midrule
        DDIM~\cite{song2021ddim}                    & 93.36 & 49.66 & 27.93 & 18.43 \\
        EDM~\cite{karras2022edm}                    & 306.2 & 97.67 & 37.28 & 15.76 \\
        DPM-Solver-2~\cite{lu2022dpm}               & 155.7 & 57.30 & 10.20 & 4.98 \\
        \textbf{AMED-Solver (ours)}                 & 18.49 & 7.59  & 4.36  & 3.67 \\
        \midrule
        \textbf{AMED-Plugin (ours)}                 & \textbf{10.81}$\dagger$ & \textbf{6.61}$\dagger$ & \textbf{3.65}$\dagger$ & \textbf{2.63}$\dagger$ \\
        \bottomrule
    \end{tabular}
    \caption{Unconditional generation on CIFAR10 32$\times$32. We show the results of AMED-Plugin applied on iPNDM.}
    \label{subtab:fid1}
  \end{subtable}
  \quad\quad\quad\quad
  \begin{subtable}[b]{0.42\textwidth}
    \centering
    \begin{tabular}{lcccc}
        \toprule
        \multirow{2}{*}{Method} & \multicolumn{4}{c}{NFE} \\
        \cmidrule{2-5}
        & 3 & 5 & 7 & 9 \\
        \midrule
        \fontsize{7}{1}\selectfont \textbf{Multi-step solvers} \\
        \midrule
        DPM-Solver++(3M)~\cite{lu2022dpmpp}         & 91.52 & 25.49 & 10.14 & 6.48 \\
        UniPC~\cite{zhao2023unipc}                  & 91.38 & 24.36 & 9.57  & 6.34 \\
        iPNDM~\cite{liu2022pseudo,zhang2023deis}    & 58.53 & 18.99 & 9.17  & 5.91 \\
        \midrule
        \fontsize{7}{1}\selectfont \textbf{Single-step solvers} \\
        \midrule
        DDIM~\cite{song2021ddim}                    & 82.96 & 43.81 & 27.46 & 19.27 \\
        EDM~\cite{karras2022edm}                    & 249.4 & 89.63 & 37.65 & 16.76 \\
        DPM-Solver-2~\cite{lu2022dpm}               & 140.2 & 42.41 & 12.03 & 6.64  \\
        \textbf{AMED-Solver (ours)}                 & 38.10 & \textbf{10.74} & \textbf{6.66} & \textbf{5.44} \\
        \midrule
        \textbf{AMED-Plugin (ours)}                 & \textbf{28.06} & 13.83 & 7.81 & 5.60 \\
        \bottomrule
    \end{tabular}
    \caption{Conditional generation on ImageNet 64$\times$64. We show the results of AMED-Plugin applied on iPNDM.}
    \label{subtab:fid2}
  \end{subtable}

  \vspace{0.3cm}
  \centering
  \fontsize{8}{10}\selectfont
  \begin{subtable}[b]{0.42\textwidth}
    \centering
    \begin{tabular}{lcccc}
      \toprule
      \multirow{2}{*}{Method} & \multicolumn{4}{c}{NFE} \\
      \cmidrule{2-5}
      & 3 & 5 & 7 & 9 \\
      \midrule
      \fontsize{7}{1}\selectfont \textbf{Multi-step solvers} \\
      \midrule
      DPM-Solver++(3M)~\cite{lu2022dpmpp}       & 86.45 & 22.51 & 8.44 & 4.77 \\
      UniPC~\cite{zhao2023unipc}                & 86.43 & 21.40 & 7.44 & 4.47 \\
      iPNDM~\cite{liu2022pseudo,zhang2023deis}  & 45.98 & 17.17 & 7.79 & 4.58 \\
      \midrule
      \fontsize{7}{1}\selectfont \textbf{Single-step solvers} \\
      \midrule
      DDIM~\cite{song2021ddim}                  & 78.21 & 43.93 & 28.86 & 21.01 \\
      DPM-Solver-2~\cite{lu2022dpm}             & 266.0 & 87.10 & 22.59 & 9.26 \\
      \textbf{AMED-Solver (ours)}               & 47.31 & 14.80 & 8.82 & 6.31 \\
      \midrule
      \textbf{AMED-Plugin (ours)}               & \textbf{26.87} & \textbf{12.49} & \textbf{6.64} & \textbf{4.24}$\dagger$ \\
      \bottomrule
    \end{tabular}
    \caption{Unconditional generation on FFHQ 64$\times$64. We show the results of AMED-Plugin applied on iPNDM.}
    \label{subtab:fid3}
  \end{subtable}
  \quad\quad\quad\quad
  \begin{subtable}[b]{0.42\textwidth}
    \centering
    \begin{tabular}{lcccc}
      \toprule
      \multirow{2}{*}{Method} & \multicolumn{4}{c}{NFE} \\
      \cmidrule{2-5}
      & 3 & 5 & 7 & 9 \\
      \midrule
      \fontsize{7}{1}\selectfont \textbf{Multi-step solvers} \\
      \midrule
      DPM-Solver++(3M)~\cite{lu2022dpmpp}       & 111.9 & 23.15 & 8.87 & 6.45 \\
      UniPC~\cite{zhao2023unipc}                & 112.3 & 23.34 & 8.73 & 6.61 \\
      iPNDM~\cite{liu2022pseudo,zhang2023deis}  & 80.99 & 26.65 & 13.80 & 8.38 \\
      \midrule
      \fontsize{7}{1}\selectfont \textbf{Single-step solvers} \\
      \midrule
      DDIM~\cite{song2021ddim}                  & 86.13 & 34.34 & 19.50 & 13.26 \\
      DPM-Solver-2~\cite{lu2022dpm}             & 210.6 & 80.60 & 23.25 & 9.61 \\
      \textbf{AMED-Solver (ours)}               & \textbf{58.21} & \textbf{13.20} & \textbf{7.10} & \textbf{5.65} \\
      \midrule
      \textbf{AMED-Plugin (ours)}               & 101.5 & 25.68 & 8.63 & 7.82 \\
      \bottomrule
    \end{tabular}
    \caption{Unconditional generation on LSUN Bedroom 256$\times$256. We show the results of AMED-Plugin applied on DPM-Solver-2.}
    \label{subtab:fid4}
  \end{subtable}
  \caption{Results of image generation. Our proposed AMED-Solver and AMED-Plugin achieve state-of-the-art results among solver-based methods in around 5 NFE. $\dagger$: additional time scaling factors $\{a_n\}_{n=1}^{N-1}$ are trained.}
  \label{tab:fid}
\end{table*}

\subsection{Settings}
\label{subsec:settings}
\noindent
\textbf{Datasets.} We employ AMED-Solver and AMED-Plugin on a wide range of datasets with image resolutions ranging from 32 to 512, including CIFAR10 32$\times$32~\cite{krizhevsky2009learning}, FFHQ 64$\times$64~\cite{karras2019style}, ImageNet 64$\times$64~\cite{russakovsky2015ImageNet}, LSUN Bedroom 256$\times$256~\cite{yu2015lsun}. We also give quantitative and qualitative results on stable-diffusion~\cite{rombach2022ldm} with resolution of 512.

\noindent
\textbf{Models.} The pre-trained models are pixel-space models from~\cite{karras2022edm} and ~\cite{song2023consistency} and latent-space model from~\cite{rombach2022ldm}.

\noindent
\textbf{Solvers.} We reimplement several representative fast ODE solvers including DDIM~\cite{song2021ddim}, DPM-Solver-2~\cite{lu2022dpm}, multi-step DPM-Solver++~\cite{lu2022dpmpp}, UniPC~\cite{zhao2023unipc} and improved PNDM (iPNDM)~\cite{liu2022pseudo,zhang2023deis}. It is worth mentioning that we find iPNDM achieves very impressive results and outperforms other ODE solvers in many cases.

\noindent
\textbf{Time schedule.} We mainly use the polynomial time schedule with $\rho=7$, which is the default setting in~\cite{karras2022edm}, except for DPM-Solver++ and UniPC where we use logSNR schedule recommended in original papers~\cite{lu2022dpmpp,zhao2023unipc} for better results. Besides, for AMED-Solver on CIFAR10 32$\times$32~\cite{krizhevsky2009learning}, FFHQ 64$\times$64~\cite{karras2019style} and ImageNet 64$\times$64~\cite{russakovsky2015ImageNet}, we use uniform time schedule which is widely used in papers with a DDPM backbone~\cite{ho2020ddpm}.

\noindent
\textbf{Training.} The total parameter of the AMED predictor $g_\phi$ is merely 9k. We train $g_\phi$ for 10k images, which takes 2-8 minutes on CIFAR10 and 1-3 hours on LSUN Bedroom using a single NVIDIA A100 GPU. L2 norm is used as the distance metric in \cref{eq:loss} for all experiments. 
We use DPM-Solver-2~\cite{lu2022dpm} or EDM~\cite{karras2022edm} with doubled NFE ($M = 1$) to generate teacher trajectories for AMED-Solver, while using the same solver that generates student trajectories for AMED-Plugin, with $M = 1$ for DPM-Solver-2 and $M = 2$ else. Detailed discussion is provided in \cref{subsec:sup_ablation_steps}.

\noindent
\textbf{Sampling.} Our AMED-Solver and AMED-Plugin naturally create solvers with even NFE. Once AFS is used, the total NFE becomes odd. With the goal of designing fast solvers with the small NFE, we mainly test our method on NFE $\in \{3, 5, 7, 9\}$ where AFS is applied. More results on NFE $\in \{4, 6, 8, 10\}$ without AFS is provided in Appendix~\ref{subsec:sup_ablation_afs}.

\noindent
\textbf{Evaluation.} We measure the sample quality via Fr\'{e}chet Inception Distance (FID)~\cite{heusel2017gans} with 50k images.
For Stable-Diffusion, we follow~\cite{ramesh2021zero} and evaluate the FID value by 30k images generated by 30k fixed prompts sampled from the MS-COCO~\cite{lin2014microsoft} validation set.

\subsection{Image Generation}
\label{subsec:image_generation}

In this section, we show the results of image generation.
For datasets with small resolution such as CIFAR10 32$\times$32, FFHQ 64$\times$64 and ImageNet 64$\times$64, we report the results of AMED-Plugin on iPNDM solver for its leading results. For large-resolution datasets like LSUN Bedroom, we report the results of AMED-Plugin on DPM-Solver-2 since the use of AFS causes inferior results for multi-step solvers in this case (see \cref{subsec:sup_ablation_afs} for a detailed discussion). We implement DPM-Solver++ and UniPC with order of 3, iPNDM with order of 4. To report results of DPM-Solver-2 and EDM with odd NFE, we apply AFS in their first steps.

\renewcommand{\arraystretch}{0.8}
\begin{table}[tbp]
  \centering
  \fontsize{8}{10}\selectfont
    \centering
    \begin{tabular}{lcccc}
      \toprule
      \multirow{2}{*}{Method} & \multicolumn{4}{c}{NFE (1 step = 2 NFE)} \\
      \cmidrule{2-5}
      & 8 & 12 & 16 & 20 \\
      \midrule
      DPM-Solver++(2M)~\cite{lu2022dpmpp}       & 21.33 & 15.99 & 14.84 & 14.58 \\
      AMED-Plugin (\textbf{ours})               & \textbf{18.92} & \textbf{14.84} & \textbf{13.96} & \textbf{13.24} \\
      \bottomrule
    \end{tabular}
  \caption{FID results on Stable-Diffusion~\cite{rombach2022ldm}. The AMED-Plugin is applied on DPM-Solver++(2M).}
  \label{tab:sup_fid_stable_diff}
\end{table}

The results are shown in \cref{tab:fid}. Our AMED-Solver outperforms other single-step methods and can even beat multi-step methods in many cases. 
For the AMED-Plugin, we find it showing large boost when applied on various solvers especially for DPM-Solver-2 as shown in \cref{subtab:fid4}. Notably, the AMED-Plugin on iPNDM improves the FID by 6.98, 4.68 and 5.16 on CIFAR10 32$\times$32, ImageNet 64$\times$64 and FFHQ 64$\times$64 in 5 NFE. Our methods achieve state-of-the-art results in solver-based methods in around 5 NFE. 

As for Stable-Diffusion~\cite{rombach2022ldm}, we use the v1.4 checkpoint with default guidance scale 7.5. Samples are generated by DPM-Solver++(2M) as recommended in the official implementation. The quantitative results are shown in \cref{tab:sup_fid_stable_diff}.

In \cref{fig:learned_results}, we show the learned parameters of $g_\phi$ for AMED-Solver, where $r_n$ and $c_n$ are predicted by $g_\phi$ and $s_n=t_n^{r_n} t_{n+1}^{1-r_n}$. The dashed line denotes the default setting of DPM-Solver-2. 
We provide more quantitative as well as qualitative results in \cref{sec:sup_additional_results}.

\begin{figure}[tbp]
    \centering
    \includegraphics[width=\linewidth]{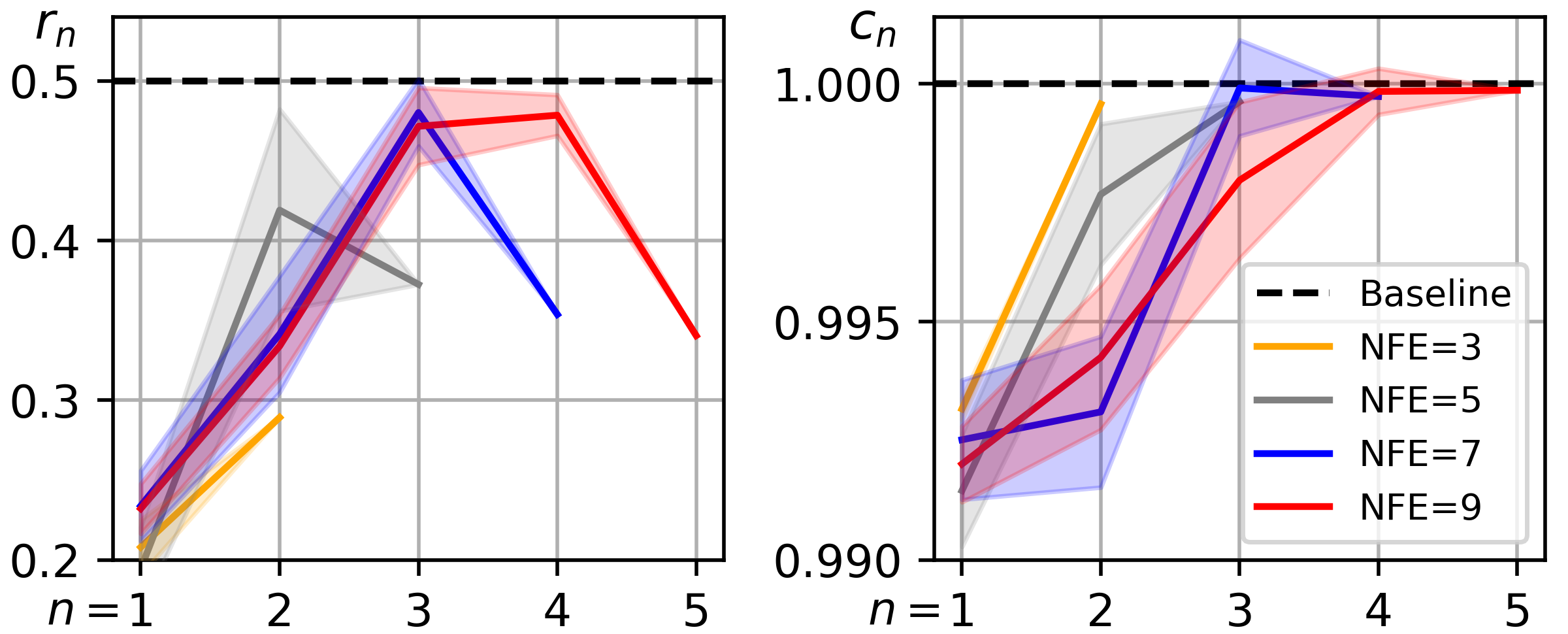}
    \caption{The learned coefficients $r_n$ and $c_n$ vary in different steps and the mean values are consistently lower than the default setting.}
    \label{fig:learned_results}
\end{figure}

\subsection{Ablation Study}
\label{subsec:ablation}

\noindent
\textbf{Teacher solvers.} 
In \cref{tab:fid_ablation_teacher}, we test AMED-Solver and AMED-Plugin on iPNDM with different teacher solvers for the training of $g_\phi$ on CIFAR10. It turns out that the best results are achieved when teacher solver resembles the student solver in the sampling process.

\renewcommand{\arraystretch}{0.8}
\begin{table}[tbp]
    \centering
    \fontsize{8}{10}\selectfont
    \begin{tabular}{lcccc}
      \toprule
      \multirow{2}{*}{Teacher Solver} & \multicolumn{4}{c}{NFE} \\
      \cmidrule{2-5}
      & 3 & 5 & 7 & 9 \\
      \midrule
      \fontsize{7}{1}\selectfont \textbf{AMED-Solver} \\
      \midrule
        DPM-Solver++(3M)~\cite{lu2022dpmpp}         & 35.68 & 12.34 & 11.28 & 9.65 \\
        iPNDM~\cite{liu2022pseudo,zhang2023deis}    & 32.38 & 12.42 & 10.09 & 8.54 \\
        DPM-Solver-2~\cite{lu2022dpm}               & \textbf{18.49} & 11.60 & 11.64 & 9.23 \\
        EDM~\cite{karras2022edm}                    & 28.99 & \textbf{7.59} & \textbf{4.36} & \textbf{3.67} \\
      \midrule
      \fontsize{7}{1}\selectfont \textbf{AMED-Plugin on iPNDM} \\
      \midrule
        DPM-Solver++(3M)~\cite{lu2022dpmpp}         & 16.00 & 7.57 & 4.28 & 2.85 \\
        iPNDM~\cite{liu2022pseudo,zhang2023deis}    & \textbf{10.81} & \textbf{6.61} & \textbf{3.65} & \textbf{2.63} \\
        DPM-Solver-2~\cite{lu2022dpm}               & 12.07 & 8.19 & 4.52 & 2.66 \\
        EDM~\cite{karras2022edm}                    & 29.62 & 10.58 & 9.36 & 4.44 \\
      \bottomrule
    \end{tabular}
    \caption{Comparison of teacher solvers on CIFAR10.}
    \label{tab:fid_ablation_teacher}
\end{table}

\noindent
\textbf{Time schedule.} We observe that different fast ODE solvers have different preference on time schedules. This preference even depends on the used dataset. In \cref{tab:limitation}, we provide results for DPM-Solver++(3M)~\cite{lu2022dpmpp} on CIFAR10 with different time schedules and find that this solver prefers logSNR schedule where the interval between the first and second time step is larger than other tested schedules. 
As a comparison, our AMED-Plugin applied on these cases largely and consistently improves the results, irrespective of the specific time schedule.

\renewcommand{\arraystretch}{0.8}
\begin{table}[htbp]
    \centering
    \fontsize{8}{11}\selectfont
    \begin{tabular}{lcccc}
      \toprule
      \multirow{2}{*}{Time schedule} & \multicolumn{4}{c}{NFE} \\
      \cmidrule{2-5}
      & 3 & 5 & 7 & 9 \\
      \midrule
      \multicolumn{5}{l}{\fontsize{7}{1}\selectfont \textbf{DPM-Solver++(3M)}} \\
      \midrule
        Uniform             & 76.80 & 26.90 & 16.80 & 13.44 \\
        Polynomial          & 70.04 & 31.66 & 11.30 & 6.45 \\
        logSNR              & 110.0 & 24.97 & 6.74  & 3.42 \\
      \midrule
      \multicolumn{5}{l}{\fontsize{7}{1}\selectfont \textbf{AMED-Plugin on DPM-Solver++(3M)}} \\
      \midrule
        Uniform             & 33.61 & 13.24 & 8.89 & 8.24 \\
        Polynomial          & 32.47 & 19.59 & 9.60 & 4.39 \\
        logSNR              & \textbf{25.95} & \textbf{7.68} & \textbf{4.51} & \textbf{3.03} \\
      \bottomrule
    \end{tabular}
    \caption{FID results under different time schedules on CIFAR10. The DPM-Solver++(3M) shows preference on the uniform logSNR schedule given larger NFE. Our AMED-Plugin consistently improves the results.}
    \label{tab:limitation}
\end{table}

\section{Conclusion}
\label{sec:conclusion}
In this paper, we introduce a single-step ODE solver called AMED-Solver to minimize the discretization error for fast diffusion sampling. Our key observation is that each sampling trajectory generated by existing ODE solvers approximately lies in a two-dimensional subspace, and thus the mean value theorem guarantees the learning of an approximate mean direction. The AMED-Solver effectively mitigates the problem of rapid sample quality degradation, which is commonly encountered in single-step methods, and shows promising results on datasets with large resolution. We also generalize the idea of AMED-Solver to AMED-Plugin, a plugin that can be applied on any fast ODE solvers to further improve the sample quality. We validate our methods through extensive experiments, and our methods achieve state-of-the-art results in extremely small NFE (around 5). We hope our attempt could inspire future works to further release the potential of fast solvers for diffusion models. 

\noindent
\textbf{Limitation and Future Work.}
Fast ODE solvers for diffusion models are highly sensitive to time schedules especially when the NFE budget is limited (See \cref{tab:limitation}). Through experiments, we observe that any fixed time schedule fails to perform well in all situations.
In fact, our AMED-Plugin can be treated as adjusting half of the time schedule to partly alleviates but not avoids this issue. 
We believe that a better designation of time schedule requires further knowledge to the geometric shape of the sampling trajectory~\cite{chen2023geometric}. We leave this to the future work. 

\section{Acknowledgement}
\label{sec:acknowledgement}
The authors would like to thank Jianlin Su and anonymous reviewers for their helpful comments. This work is supported by the Starry Night Science Fund of Zhejiang University Shanghai Institute for Advanced Study, China (Grant No: SN-ZJU-SIAS-001), National Natural Science Foundation of China (Grant No: U1866602) and the advanced computing resources provided by the Supercomputing Center of Hangzhou City University. 
\clearpage
{
    \small
    \bibliographystyle{ieeenat_fullname}
    \bibliography{main}
}
\appendix
\clearpage
\maketitlesupplementary


\section{Related Works}
\label{sec:sup_related_works}
Ever since the birth of diffusion models~\cite{ho2020ddpm,song2019ncsn}, their generation speed has become a major drawback compared to other generative models~\cite{goodfellow2014generative,kingma2013auto}.
To address this issue, efforts have been taken to accelerate the sampling of diffusion models, which fall into two main streams.

One is designing faster solvers. In early works~\cite{nichol2021improved,song2021ddim}, the authors speed up the generation from 1000 to less than 50 NFE by reducing the number of time steps with systematic or quadratic sampling. Analytic-DPM~\cite{bao2022analytic}, provides an analytic form of optimal variance in sampling process and improves the results. 
More recently, with the knowledge of interpreting the diffusion process as a PF-ODE~\cite{song2021sde}, there is a class of ODE solvers based on numerical methods that accelerate the sampling process to around 10 NFE. 
The authors in EDM~\cite{karras2022edm} achieve several improvements on training and sampling of diffusion models and propose to use Heun's second method. 
PDNM~\cite{liu2022pseudo} uses linear multi-step method to solve the PF-ODE with Runge-Kutta algorithms for the warming start. The authors in~\cite{zhang2023deis} recommend to use lower order linear multi-step method for warming start and propose iPNDM. 
Given the semi-linear structure of the PF-ODE, DPM-Solver~\cite{lu2022dpm} and DEIS~\cite{zhang2023deis} are proposed by approximate the integral involved in the analytic solution of PF-ODE with Taylor expansion and polynomial extrapolation respectively. 
DPM-Solver is further extend to both single-step and multi-step methods in DPM-Solver++~\cite{lu2022dpmpp}. 
UniPC~\cite{zhao2023unipc} gives a unified predictor-corrector solver and improved results compared to DPM-Solver++. 

Besides training-free fast solvers above, there are also solvers requiring additional training. ~\cite{watson2021learning} proposes a series of reparameterization for a generalized family of DDPM with a KID~\cite{binkowski2018demystifying} loss. More related to our method, in GENIE~\cite{dockhorn2022genie}, the authors apply the second truncated Taylor method~\cite{platen2010numerical} to the PF-ODE and distill a new model to predict the higher-order gradient term. Different from this, in AMED-Solver, we train a network that only predict the intermediate time steps, instead of a high-dimensional output.

Another mainstream is training-based distillation methods, which attempt to build a direct mapping from noise distribution to implicit data distribution. This idea is first introduced in~\cite{luhman2021knowledge} as an offline method where one needs to pre-construct a dataset generated by the original model.
Rectified flow~\cite{liu2022flow} also introduces an offline distillation based on optimal transport. For online distillation methods, one can progressively distill a diffusion model from more than 1k steps to 1 step~~\cite{salimans2022progressive,berthelot2023tract}, or utilize the consistency property of PF-ODE trajectory to tune the denoising output~\cite{daras2023consistent,song2023consistency,gu2023boot}.

\section{Experimental Details}
\label{sec:sup_exp_details}

\noindent
\textbf{Datasets.} We employ AMED-Solver and AMED-Plugin on a wide range of datasets and settings. We report results on settings including unconditional generation in both pixel and latent space, conditional generation with or without guidance. Datasets are chosen with image resolutions ranging from 32 to 512, including CIFAR10 32$\times$32~\cite{krizhevsky2009learning}, FFHQ 64$\times$64~\cite{karras2019style}, ImageNet 64$\times$64~\cite{russakovsky2015ImageNet} and LSUN Bedroom 256$\times$256~\cite{yu2015lsun}. We give quantitative and qualitative results generated by stable-diffusion~\cite{rombach2022ldm} with resolution of 512.
To further evaluate the effectiveness of our methods, in \cref{sec:sup_additional_results}, we also include more results on ImageNet 256$\times$256 ~\cite{russakovsky2015ImageNet} with classifier guidance and latent-space LSUN Bedroom 256$\times$256~\cite{yu2015lsun}.

\noindent
\textbf{Models.} The pre-trained models we use throughout our experiments are pixel-space models from~\cite{karras2022edm},~\cite{song2023consistency} as well as~\cite{dhariwal2021diffusion}, and latent-space models from~\cite{rombach2022ldm}. Our code architecture is mainly based on the implementation in~\cite{karras2022edm}.

\noindent
\textbf{Fast ODE solvers.} To give fair comparison, we reimplement several representative fast ODE solvers including DDIM~\cite{song2021ddim}, DPM-Solver-2~\cite{lu2022dpm}, multi-step DPM-Solver++~\cite{lu2022dpmpp}, UniPC~\cite{zhao2023unipc} and improved PNDM (iPNDM)~\cite{liu2022pseudo,zhang2023deis}. Through the implementation, we obtain better or on par FID results compared with original papers. It is worth mentioning that during the implementation, we find iPNDM achieves very impressive results and outperforms other ODE solvers in many cases.

\noindent
\textbf{Time schedule.} We notice that different ODE solvers have different preference on time schedule. We mainly use the polynomial time schedule with $\rho=7$, which is the default setting in~\cite{karras2022edm}, except for DPM-Solver++ and UniPC where we use logSNR time schedule recommended in original papers~\cite{lu2022dpmpp,zhao2023unipc} for better results. 
As illustrated in \cref{subsec:training_and_sampling}, the logSNR schedule is the limit case of polynomial schedule as $\rho$ approaches $+\infty$. 
Besides, for AMED-Solver on CIFAR10 32$\times$32~\cite{krizhevsky2009learning}, FFHQ 64$\times$64~\cite{karras2019style} and ImageNet 64$\times$64~\cite{russakovsky2015ImageNet}, we use uniform time schedule which is widely used in papers with a DDPM~\cite{ho2020ddpm} backbone. This uniform time schedule is transferred from its original range $[0.001, 1]$ to $[t_1, t_N]$ in our setting following the EDM~\cite{karras2022edm} implementation.
For experiments on ImageNet 256$\times$256 with classifier guidance and latent-space LSUN Bedroom, the uniform time schedule gives best results. The reason may lie in their different training process.

\noindent
\textbf{Training.} Since there are merely 9k parameters in the AMED predictor $g_\phi$, its training does not cause much computational cost. The training process spends its main time on generating student and teacher trajectories. We train $g_\phi$ for 10k images, which takes 2-8 minutes on CIFAR10 and 1-3 hours on LSUN Bedroom using a single NVIDIA A100 GPU. For the distance metric in \cref{eq:loss}, we use L2 norm in all experiments.
For the generation of teacher trajectories for AMED-Solver, we use DPM-Solver-2~\cite{lu2022dpm} or EDM~\cite{karras2022edm} with doubled NFE ($M = 1$). For AMED-Plugin, we use the same solver that generates student trajectories with $M = 1$ for DPM-Solver-2 and $M = 2$ else.

\noindent
\textbf{Sampling.} Due to designation, our AMED-Solver or AMED-Plugin naturally create solvers with even NFE. Therefore once AFS is used, the total NFE become odd. With the goal of designing fast ODE solvers in extremely small NFE, we mainly test our method on NFE $\in \{3, 5, 7, 9\}$ where AFS is applied. There are also results on NFE $\in \{4, 6, 8, 10\}$ without using AFS.

\noindent
\textbf{Evaluation.} We measure the sample quality via Frechet Inception Distance (FID)~\cite{heusel2017gans}, which is a well-known quantitative evaluation metric for image quality that aligns well with human perception. For all the experiments involved, we calculate FID with 50k samples using the implementation in~\cite{karras2022edm}. For Stable-Diffusion, we follow~\cite{ramesh2021zero} and evaluate the FID value by 30k samples generated by 30k fixed prompts sampled from the MS-COCO~\cite{lin2014microsoft} validation set.

\section{Additional Results}
\label{sec:sup_additional_results}

\subsection{Fast Degradation of Single-step Solvers}
\label{subsec:sup_degradation}
In pilot experiments, we find that the fast degradation of single-step solvers can be alleviated by appropriate choose of the intermediate time steps. As shown in \cref{tab:fid_degeadation}, the performance of DPM-Solver-2 is very sensitive to the choice of its hyperparameter $r$. We apply AMED-Plugin to learn the appropriate $r$ and find that it achieves similar results with the best searched $r$ while adding little training overhead and negligible sampling overhead. 

\renewcommand{\arraystretch}{0.8}
\begin{table}[htbp]
    \centering
    \fontsize{8}{10}\selectfont
    \begin{tabular}{lcccc}
      \toprule
      \multirow{2}{*}{$r$} & \multicolumn{4}{c}{NFE} \\
      \cmidrule{2-5}
      & 4 & 6 & 8 & 10 \\
      \midrule
      0.1           & 28.75 & 17.61 & 11.00 & 5.30 \\
      0.2           & 24.44 & 21.94 & 8.23  & 4.17 \\
      0.3           & 37.31 & 30.27 & 6.82  & \textbf{3.89} \\
      0.4           & 75.06 & 43.32 & 7.34  & 4.20 \\
      0.5 (default) & 146.0 & 60.00 & 10.30 & 5.01 \\
      0.6           & 241.2 & 79.27 & 15.62 & 6.38 \\
      \textbf{AMED-Plugin}  & \textbf{24.44} & \textbf{17.10} & \textbf{6.60} & 4.73 \\
      \bottomrule
    \end{tabular}
    \caption{Performance of DPM-Solver-2 on CIFAR10 is sensitive to the choice of $r$. Applying AMED-Plugin on DPM-Solver-2 efficiently help to learn the appropriate $r$.}
    \label{tab:fid_degeadation}
\end{table}

\subsection{Ablation Study on Intermediate Steps}
\label{subsec:sup_ablation_steps}
As illustrated in \cref{subsec:training_and_sampling}, for the teacher sampling trajectory, $M$ intermediate time steps are injected in every sampling step. 
We use a smooth interpolation, meaning that the teacher time schedule given by the original schedule $\Gamma$ combined with injected time steps, is equivalent to the time schedule obtained by simply setting the total number of time steps to $(M+1)(N-1)+1$ in \cref{eq:poly_schedule} under the same $\rho$. In this way, we can easily extract samples on teacher trajectories at $\Gamma$ to get reference samples $\{\bfy_{t_n}\}_{n=1}^{N}$.

Here we take unconditional generation on CIFAR10 using iPNDM with AMED-Plugin as an example and provide an ablation study on the choose of $M$. The student and teacher solvers are set to be the same. The results are shown in \cref{tab:sup_ablation_m}.

\renewcommand{\arraystretch}{0.8}
\begin{table}[htbp]
  \fontsize{8}{10}\selectfont
    \centering
    \begin{tabular}{lccccc}
        \toprule
        \multirow{2}{*}{$M$} & \multicolumn{4}{c}{NFE} \\
        \cmidrule{2-5}
        & 3 & 5 & 7 & 9 \\
        \midrule
        1 & 25.98 & 7.53 & 4.01 & 2.77 \\
        2 & \textbf{10.81} & \textbf{6.61} & \textbf{3.65} & 2.63 \\
        3 & 11.20 & 7.00 & 4.15 & 2.63 \\
        4 & 10.92 & 7.40 & 3.69 & \textbf{2.61} \\
        5 & 12.87 & 7.70 & 3.65 & 2.75 \\
        \bottomrule
    \end{tabular}
  \caption{The sensitivity of $M$ on CIFAR10 with AMED-Plugin on iPNDM. Additional time scaling factors are trained.}
  \label{tab:sup_ablation_m}
\end{table}

\subsection{Ablation Study on Bottleneck Feature Input and Time Scaling Factor}
\label{subsec:sup_ablation_bottleneck}
As the U-Net bottleneck input to $g_\phi$ varies for different samples, the learned parameters are sample-wise, meaning that different trajectories have different time schedules. 
Since sampling trajectories from different starting points share similar geometric shapes, the effectiveness of inputting bottleneck might be limited. We also notice that during training, the standard deviation of learned parameters in one batch is small. 
Therefore, we should test if the U-Net bottleneck feature input is necessary.
In \cref{tab:sup_ablation_bottleneck} we replace U-Net bottleneck feature with zero matrix (w/o bottleneck) and get shared parameters across all sampling trajectories. The results show the effectiveness of the bottleneck feature input.
Besides, the use of time scaling factors provides improved results. When applying our AMED-Plugin on DDIM~\cite{song2021ddim}, iPNDM~\cite{zhang2023deis} and DPM-Solver++~\cite{lu2022dpmpp} on datasets with small resolution (32$\times$32 and 64$\times$64), we optionally train this time scaling factors through $g_\phi$ to expand the solution space.

\renewcommand{\arraystretch}{0.8}
\begin{table}[htbp]
    \centering
    \fontsize{8}{10}\selectfont
    \begin{tabular}{lcccc}
      \toprule
      \multirow{2}{*}{Method} & \multicolumn{4}{c}{NFE} \\
      \cmidrule{2-5}
      & 3 & 5 & 7 & 9 \\
      \midrule
      iPNDM~\cite{liu2022pseudo,zhang2023deis}    & 47.98 & 13.59 & 5.08 & 3.17 \\
      AMED-Plugin (w/ bottleneck)                 & 15.87 & 7.29  & 3.92 & 2.82 \\
      AMED-Plugin (w/o bottleneck) $\dagger$      & 11.12 & 7.31  & 3.80 & 2.64 \\
      AMED-Plugin (w/ bottleneck)  $\dagger$      & \textbf{10.81} & \textbf{6.61} & \textbf{3.65} & \textbf{2.63} \\
      \bottomrule
    \end{tabular}
    \caption{Ablation study of the bottleneck feature input on CIFAR10 with AMED-Plugin on iPNDM. $\dagger$: additional time scaling factors are trained.}
    \label{tab:sup_ablation_bottleneck}
\end{table}

\subsection{Ablation Study on AFS}
\label{subsec:sup_ablation_afs}
The trick of analytical first step (AFS) is first introduced in~\cite{dockhorn2022genie} to reduce one NFE, where the authors replace the U-Net output in the first sampling step with the direction of $x_T$. In \cref{tab:sup_fid_cifar10} and \cref{tab:sup_fid_imagenet64}, we provide extended results of \cref{subtab:fid1} and \cref{subtab:fid2} as well as the ablations between AFS and our proposed AMED-Plugin. We find that the use of AFS provides consistent improvement on datasets with resolutions of 32 and 64. The results show that in most cases they can be considered as two independent components that can together boost the performance of various ODE solvers. However, for datasets with large resolutions, applying AFS usually causes a large degradation (see \cref{tab:sup_ablation_bedroom}).

\renewcommand{\arraystretch}{0.8}
\begin{table}[htbp]
  \fontsize{8}{10}\selectfont
    \centering
    \begin{tabular}{lcccccc}
        \toprule
        \multirow{2}{*}{Method} & \multirow{2}{*}{AFS} & \multicolumn{4}{c}{NFE} \\
        \cmidrule{3-6}
        & & 3 & 5 & 7 & 9 \\
        \midrule
        \multirow{2}{*}{DPM-Solver++(3M)~\cite{lu2022dpmpp}}
        & \usym{2613}  & 111.9 & 23.15 & 8.87  & 6.45 \\
        & \usym{1F5F8} & 127.5 & 25.04 & 10.51 & 7.32 \\
        \midrule
        \multirow{2}{*}{UniPC~\cite{zhao2023unipc}}
        & \usym{2613}  & 112.3 & 23.34 & 8.73  & 6.61 \\
        & \usym{1F5F8} & 127.3 & 25.78 & 10.50 & 7.20 \\
        \midrule
        \multirow{2}{*}{iPNDM~\cite{liu2022pseudo,zhang2023deis}}
        & \usym{2613}  & 80.99 & 26.65 & 13.80 & 8.38 \\
        & \usym{1F5F8} & 95.61 & 34.61 & 21.96 & 10.06 \\
        \midrule
        \multirow{2}{*}{DPM-Solver-2~\cite{lu2022dpm}}
        & \usym{2613}  & 210.6 & 80.60 & 23.25 & 9.61 \\
        & \usym{1F5F8} & 241.8 & 88.79 & 22.59 & 9.07 \\
        \midrule
        AMED-Solver (\textbf{ours})
        & \usym{1F5F8} & \textbf{58.51} & \textbf{13.20} & \textbf{7.10} & \textbf{5.65} \\
        \bottomrule
    \end{tabular}
  \caption{Ablation study of AFS on pixel-space LSUN Bedroom 256$\times$256.}
  \label{tab:sup_ablation_bedroom}
\end{table}

\renewcommand{\arraystretch}{0.85}
\begin{table*}[htbp]
  \fontsize{8}{11}\selectfont
    \centering
    \begin{tabular}{lcccccccccc}
        \toprule
        \multirow{2}{*}{Method} & \multirow{2}{*}{AFS} & \multirow{2}{*}{\textbf{AMED}} & \multicolumn{8}{c}{NFE} \\
        \cmidrule{4-11}
        & & & 3 & 4 & 5 & 6 & 7 & 8 & 9 & 10 \\
        \midrule
        \multicolumn{4}{l}{\fontsize{7}{1}\selectfont \textbf{Multi-step solvers}} \\
        \midrule
        \multirow{2}{*}{UniPC~\cite{zhao2023unipc}}
        & \usym{2613}  & \usym{2613}  & 109.6 & 45.20 & 23.98 & 11.14 & 5.83 & 3.99 & 3.21 & 2.89  \\
        & \usym{1F5F8} & \usym{2613}  & 54.36 & 20.55 & 9.01  & 5.75  & 4.11 & 3.26 & 2.93 & 2.65 \\
        \midrule
        \multirow{4}{*}{DPM-Solver++(3M)~\cite{lu2022dpmpp}$\dagger$}
        & \usym{2613}  & \usym{2613}  & 110.0 & 46.52 & 24.97 & 11.99 & 6.74 & 4.54 & 3.42 & 3.00 \\
        & \usym{1F5F8} & \usym{2613}  & 55.74 & 22.40 & 9.94  & 5.97  & 4.29 & 3.37 & 2.99 & 2.71 \\
        & \usym{2613}  & \usym{1F5F8} & -     & 21.62 & -     & 6.82  & -    & 4.41 & -    & 2.76 \\
        & \usym{1F5F8} & \usym{1F5F8} & 25.95 & -     & 7.68  & -     & 4.51 & -    & 3.03 & -    \\
        \midrule
        \multirow{4}{*}{iPNDM~\cite{liu2022pseudo,zhang2023deis}$\dagger$}
        & \usym{2613}  & \usym{2613}  & 47.98 & 24.82 & 13.59 & 7.05 & 5.08 & 3.69 & 3.17 & 2.77 \\
        & \usym{1F5F8} & \usym{2613}  & 24.54 & 13.92 & 7.76  & \textbf{5.07} & 4.04 & \textbf{3.22} & 2.83 & 2.56 \\
        & \usym{2613}  & \usym{1F5F8} & - & \textbf{10.43} & - & 6.67 & - & 3.34 & - & \textbf{2.48} \\
        & \usym{1F5F8} & \usym{1F5F8} & \textbf{10.81} & - & \textbf{6.61} & - & \textbf{3.65} & - & \textbf{2.63} & - \\
        \midrule
        \fontsize{7}{1}\selectfont \textbf{Single-step solvers} \\
        \midrule
        \multirow{4}{*}{DDIM~\cite{song2021ddim}$\dagger$}
        & \usym{2613}  & \usym{2613}  & 93.36 & 66.76 & 49.66 & 35.62 & 27.93 & 22.32 & 18.43 & 15.69 \\
        & \usym{1F5F8} & \usym{2613}  & 67.26 & 49.96 & 35.78 & 28.00 & 22.37 & 18.48 & 15.69 & 13.47 \\
        & \usym{2613}  & \usym{1F5F8} & -     & 37.72 & -     & 25.15 & -     & 17.03 & -     & 11.33 \\
        & \usym{1F5F8} & \usym{1F5F8} & 38.23 & -     & 24.44 & -     & 15.72 & -     & 10.93 & -     \\
        \midrule
        \multirow{4}{*}{DPM-Solver-2~\cite{lu2022dpm}}
        & \usym{2613}  & \usym{2613}  & -     & 146.0 & -     & 60.00 & -     & 10.30 & -     & 5.01  \\
        & \usym{1F5F8} & \usym{2613}  & 155.7 & -     & 57.28 & -     & 10.20 & -     & 4.98  & -     \\
        & \usym{2613}  & \usym{1F5F8} & -     & 24.44 & -     & 17.10 & -     & 6.60  & -     & 4.73  \\
        & \usym{1F5F8} & \usym{1F5F8} & 38.48 & -     & 28.14 & -     & 7.46  & -     & 4.73  & -     \\
        \midrule
        \multirow{2}{*}{AMED-Solver (\textbf{ours})}
        & \usym{2613}  & \usym{1F5F8} & -     & \textbf{17.18} & -     & \textbf{7.04}  & -    & \textbf{5.56} &  -   & \textbf{4.14} \\
        & \usym{1F5F8} & \usym{1F5F8} & \textbf{18.49} & -     & \textbf{7.59} &  -    & \textbf{4.36} & -    & \textbf{3.67} &  -   \\
        \bottomrule
    \end{tabular}
  \caption{Unconditional generation on CIFAR10 32$\times$32. $\dagger$: additional time scaling factors $\{a_n\}_{n=1}^{N-1}$ are trained.}
  \label{tab:sup_fid_cifar10}
\end{table*}

\renewcommand{\arraystretch}{0.85}
\begin{table*}[htbp]
  \fontsize{8}{11}\selectfont
    \centering
    \begin{tabular}{lcccccccccc}
        \toprule
        \multirow{2}{*}{Method} & \multirow{2}{*}{AFS} & \multirow{2}{*}{\textbf{AMED}} & \multicolumn{8}{c}{NFE} \\
        \cmidrule{4-11}
        & & & 3 & 4 & 5 & 6 & 7 & 8 & 9 & 10 \\
        \midrule
        \multicolumn{4}{l}{\fontsize{7}{1}\selectfont \textbf{Multi-step solvers}} \\
        \midrule
        \multirow{2}{*}{UniPC~\cite{zhao2023unipc}}
        & \usym{2613}  & \usym{2613}  & 91.38 & 55.63 & 24.36 & 14.30 & 9.57  & 7.52 & 6.34 & 5.53 \\
        & \usym{1F5F8} & \usym{2613}  & 64.54 & 29.59 & 16.17 & 11.03 & 8.51  & 6.98 & 6.04 & 5.26 \\
        \midrule
        \multirow{4}{*}{DPM-Solver++(3M)~\cite{lu2022dpmpp}$\dagger$}
        & \usym{2613}  & \usym{2613}  & 91.52 & 56.34 & 25.49 & 15.06 & 10.14 & 7.84 & 6.48 & 5.67 \\
        & \usym{1F5F8} & \usym{2613}  & 65.20 & 30.56 & 16.87 & 11.38 & 8.68  & 7.12 & 6.25 & 5.58 \\
        & \usym{2613}  & \usym{1F5F8} & -     & 53.28 & -     & 13.68 & -     & 7.98 & -    & 5.57 \\
        & \usym{1F5F8} & \usym{1F5F8} & 76.51 & -     & 15.21 & -     & 8.36  & -    & 6.04 & -    \\
        \midrule
        \multirow{4}{*}{iPNDM~\cite{liu2022pseudo,zhang2023deis}}
        & \usym{2613}  & \usym{2613}  & 58.53 & 33.79 & 18.99 & 12.92 & 9.17 & 7.20 & 5.91 & 5.11 \\
        & \usym{1F5F8} & \usym{2613}  & 34.81 & \textbf{21.32} & 15.53  & \textbf{10.27} & 8.64 & \textbf{6.60} & 5.64 & \textbf{4.97} \\
        & \usym{2613}  & \usym{1F5F8} & - & 23.55 & - & 12.05 & - & 7.03 & - & 5.01 \\
        & \usym{1F5F8} & \usym{1F5F8} & \textbf{28.06} & - & \textbf{13.83} & - & \textbf{7.81} & - & \textbf{5.60} & - \\
        \midrule
        \fontsize{7}{1}\selectfont \textbf{Single-step solvers} \\
        \midrule
        \multirow{4}{*}{DDIM~\cite{song2021ddim}$\dagger$}
        & \usym{2613}  & \usym{2613}  & 82.96 & 58.43 & 43.81 & 34.03 & 27.46 & 22.59 & 19.27 & 16.72 \\
        & \usym{1F5F8} & \usym{2613}  & 62.42 & 46.06 & 35.48 & 28.50 & 23.31 & 19.82 & 17.14 & 15.02 \\
        & \usym{2613}  & \usym{1F5F8} & -     & 40.85 & -     & 32.46 & -     & 20.72 & -     & 15.52 \\
        & \usym{1F5F8} & \usym{1F5F8} & 46.10 & -     & 33.54 & -     & 21.94 & -     & 15.56 & -     \\
        \midrule
        \multirow{4}{*}{DPM-Solver-2~\cite{lu2022dpm}}
        & \usym{2613}  & \usym{2613}  & -     & 129.8 & -     & 44.83 & -     & 12.42 & -     & 6.84  \\
        & \usym{1F5F8} & \usym{2613}  & 140.2 & -     & 42.41 & -     & 12.03 & -     & 6.64  & -     \\
        & \usym{2613}  & \usym{1F5F8} & -     & 40.99 & -     & 31.19 & -     & 11.24 & -     & 6.94  \\
        & \usym{1F5F8} & \usym{1F5F8} & 70.64 & -     & 29.96 & -     & 11.54 & -     & 6.91  & -     \\
        \midrule
        \multirow{2}{*}{AMED-Solver (\textbf{ours})}
        & \usym{2613}  & \usym{1F5F8} & -     & \textbf{32.69} & -     & \textbf{10.63}  & -    & \textbf{7.71} &  -   & \textbf{6.06} \\
        & \usym{1F5F8} & \usym{1F5F8} & \textbf{38.10} & -     & \textbf{10.74} &  -    & \textbf{6.66} & -    & \textbf{5.44} &  -   \\
        \bottomrule
    \end{tabular}
  \caption{Conditional generation on ImageNet 64$\times$64. $\dagger$: additional time scaling factors $\{a_n\}_{n=1}^{N-1}$ are trained.}
  \label{tab:sup_fid_imagenet64}
\end{table*}

\subsection{More Quantitative Results}
\label{subsec:sup_quantitative}
In this section, we provide additional quantitative results on more datasets including latent-space LSUN Bedroom 256$\times$256~\cite{yu2015lsun,rombach2022ldm} and ImageNet 256$\times$256~\cite{russakovsky2015ImageNet} with classifier guidance~\cite{dhariwal2021diffusion}. The results are shown in \cref{tab:sup_fid_latent_bedroom} and \cref{tab:sup_fid_guidance}.

\renewcommand{\arraystretch}{0.8}
\begin{table}[htbp]
  \centering
  \fontsize{8}{10}\selectfont
    \centering
    \begin{tabular}{lcccc}
      \toprule
      \multirow{2}{*}{Method} & \multicolumn{4}{c}{NFE} \\
      \cmidrule{2-5}
      & 4 & 6 & 8 & 10 \\
      \midrule
      DPM-Solver++(3M)~\cite{lu2022dpmpp}       & 48.55 & 10.01 & 4.61 & 3.62 \\
      AMED-Plugin (\textbf{ours})               & \textbf{15.67} & \textbf{8.92} & \textbf{4.19} & \textbf{3.52} \\
      \bottomrule
    \end{tabular}
  \caption{Unconditional generation on latent-space LSUN Bedroom. AMED-Plugin is applied on DPM-Solver++.}
  \label{tab:sup_fid_latent_bedroom}
\end{table}

\renewcommand{\arraystretch}{0.8}
\begin{table}[htbp]
  \centering
  \fontsize{8}{10}\selectfont
    \centering
    \begin{tabular}{lcccc}
      \toprule
      \multirow{2}{*}{Method} & \multicolumn{4}{c}{NFE} \\
      \cmidrule{2-5}
      & 4 & 6 & 8 & 10 \\
      \midrule
      \fontsize{7}{1}\selectfont \textbf{Guidance scale = 8.0} \\
      \midrule
      DPM-Solver++(3M)~\cite{lu2022dpmpp}       & 60.01 & 25.51 & \textbf{11.98} & \textbf{7.95} \\
      AMED-Plugin (\textbf{ours})               & \textbf{39.84} & \textbf{21.79} & 13.94 & 9.05 \\
      \midrule
      \fontsize{7}{1}\selectfont \textbf{Guidance scale = 4.0} \\
      \midrule
      DPM-Solver++(3M)~\cite{lu2022dpmpp}       & 27.15 & 10.25 & 7.10 & 6.15 \\
      AMED-Plugin (\textbf{ours})               & \textbf{24.19} & \textbf{8.86} & \textbf{6.54} & \textbf{5.72} \\
      \midrule
      \fontsize{7}{1}\selectfont \textbf{Guidance scale = 2.0} \\
      \midrule
      DPM-Solver++(3M)~\cite{lu2022dpmpp}       & \textbf{23.06} & 10.17 & 7.04 & 5.92 \\
      AMED-Plugin (\textbf{ours})               & 28.81 & \textbf{9.56} & \textbf{6.42} & \textbf{5.46} \\
      \bottomrule
    \end{tabular}
  \caption{Conditional generation on ImageNet256 with classifier guidance. AMED-Plugin is applied on DPM-Solver++.}
  \label{tab:sup_fid_guidance}
\end{table}

\subsection{More Qualitative Results}
\label{subsec:sup_qualitative}
We give more qualitative results generated by stable-diffusion-v1~\cite{rombach2022ldm} with a default classifier-free guidance scale 7.5 in \cref{fig:sup_sd}. Results on various datasets with NFE of 3 and 5 are provided from \cref{fig:sup_grid_cifar10_3} to \cref{fig:sup_grid_ffhq64_5}.

\begin{figure}
  \centering
  \begin{subfigure}[b]{\linewidth}
      \includegraphics[width=\linewidth]{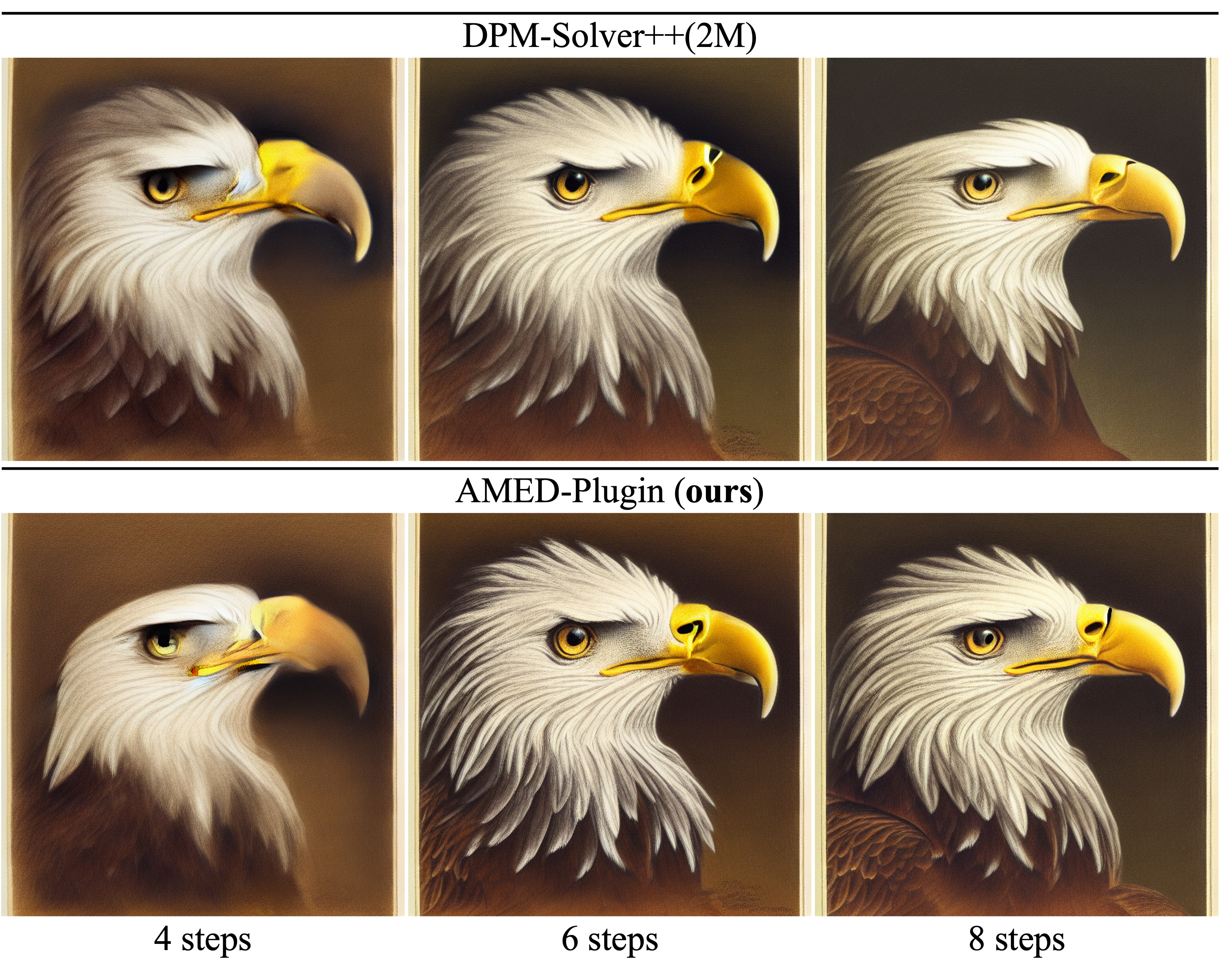}
      \caption{Text prompt: "\textit{A portrait of an eagle}".}
  \end{subfigure}
  \begin{subfigure}[b]{\linewidth}
      \includegraphics[width=\linewidth]{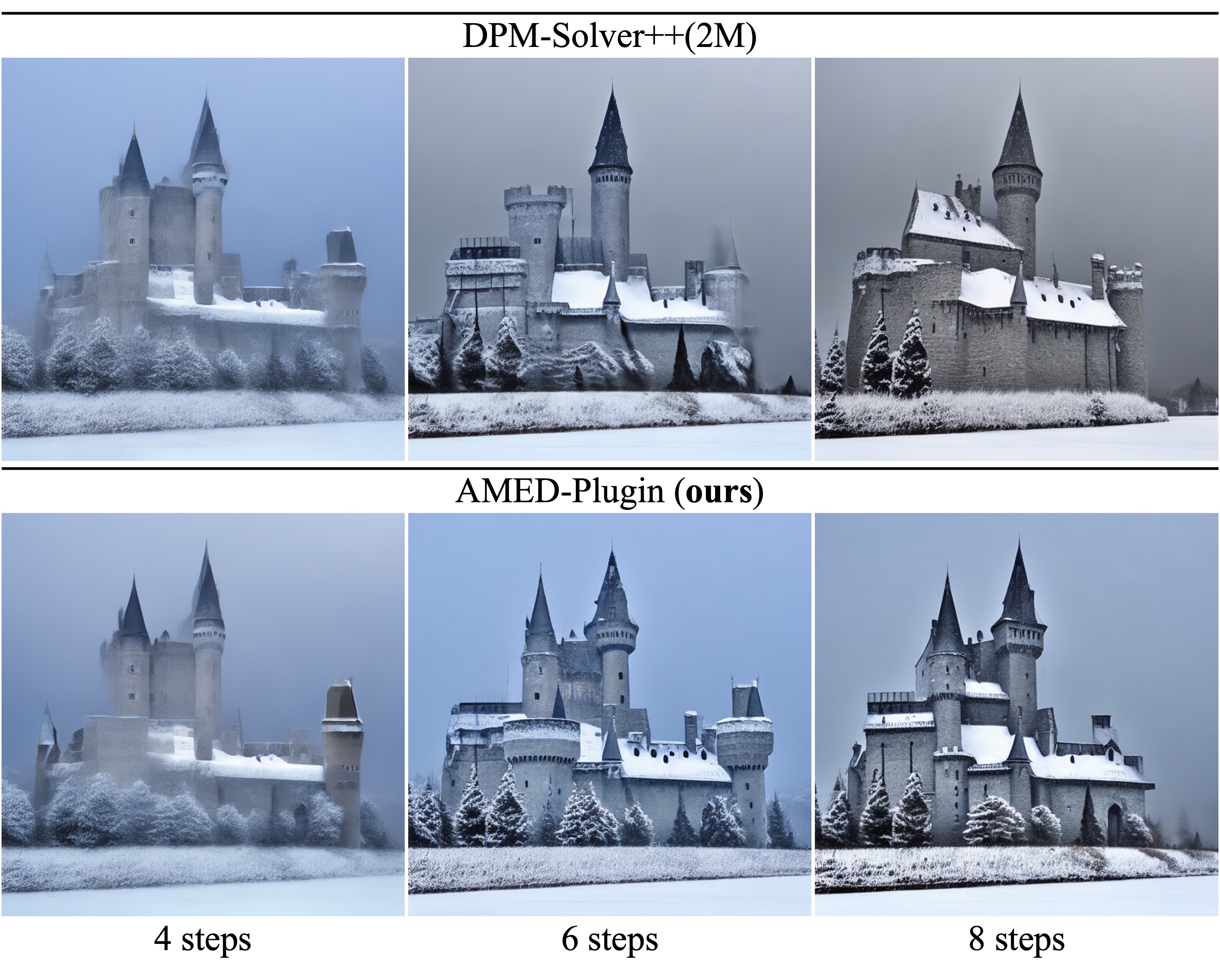}
      \caption{Text prompt: "\textit{Under a gray sky, a castle in ice and snow}".}
  \end{subfigure}
  \begin{subfigure}[b]{\linewidth}
      \includegraphics[width=\linewidth]{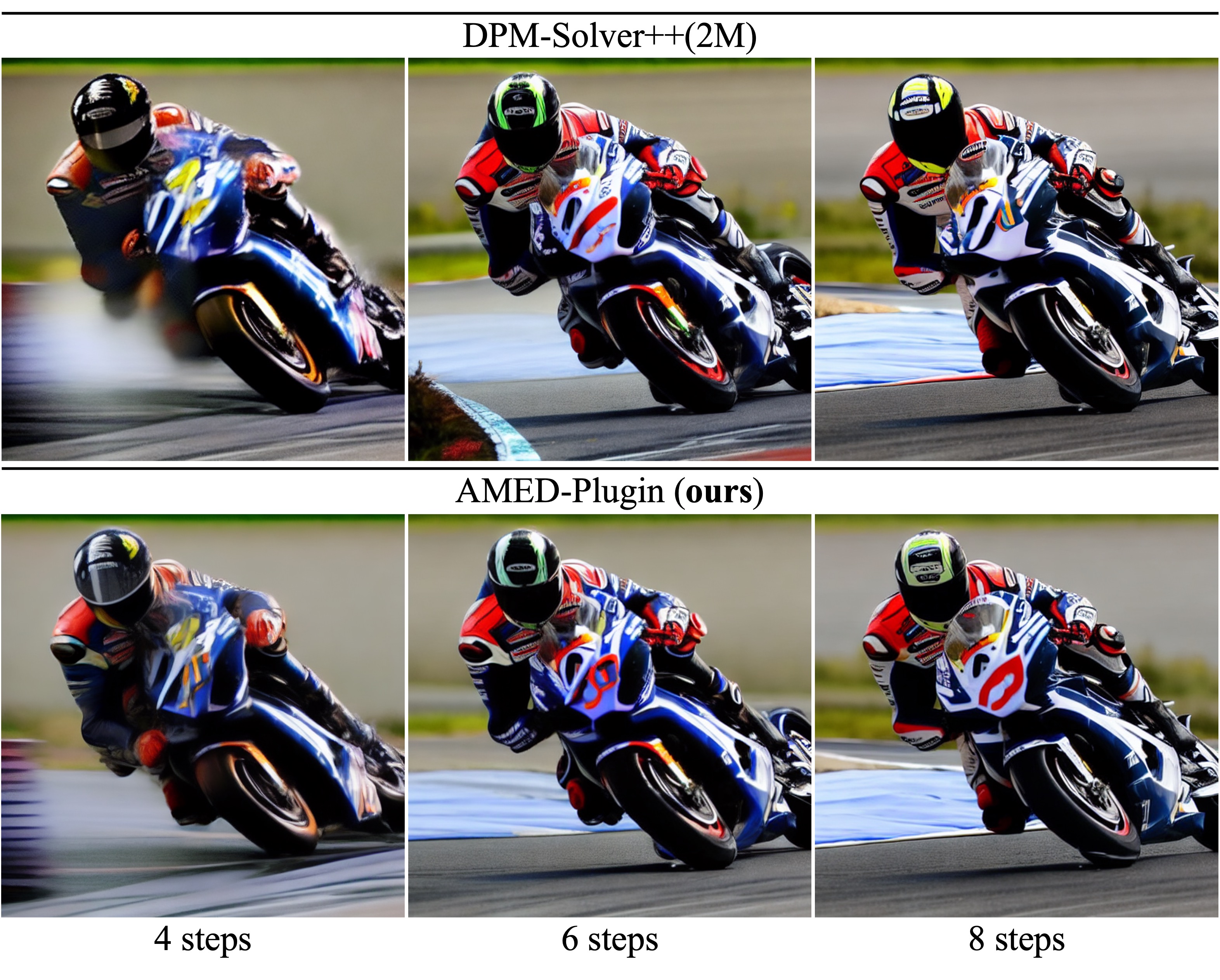}
      \caption{Text prompt: "\textit{A motorcycle racer is bending}".}
  \end{subfigure}
  \caption{Synthesized images by Stable-Diffusion v1.4~\cite{rombach2022ldm} with default classifier-free guidance scale 7.5.}
  \label{fig:sup_sd}
\end{figure}

\section{Theoretical Analysis}
\label{sec:sup_theoretical}
In \cref{subsec:geometric}, we experimentally showed that the sampling trajectory of diffusion models generated by an ODE solver almost lies in a two-dimensional subspace embedded in the ambient space. This is the core condition for the mean value theorem to approximately hold in the vector-valued function case. However, the sampling trajectory would not necessarily lie in a plane. In this section, we analyze to what extent will this affects our AMED-Solver, where we set the two-dimensional subspace to be the place spanned by the first two principal components.

\begin{figure*}
    \centering
    \includegraphics[width=\textwidth]{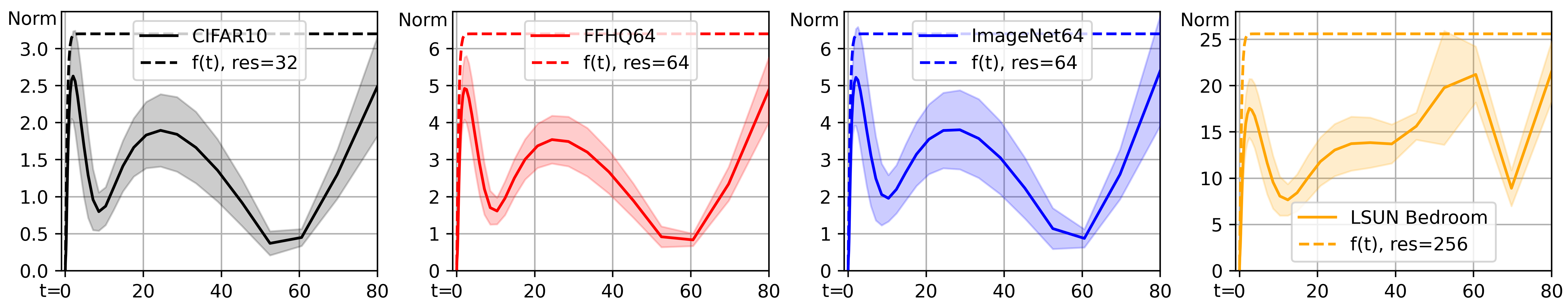}
    \caption{Following the experiment in \cref{subsec:geometric}, we calculate $\left\|\bfx_{t}-\tilde{\bfx}_{t}\right\|_2$ and find that we can bound it by a proper setting of \cref{eq:sup_logsitic}.}
    \label{fig:sup_pca}
\end{figure*}

\noindent
\textbf{Notations.} Denote $d$ as the dimension of the ambient space. Let $\{\bfx_{\tau}\}_{\tau=\epsilon}^{T}$ to be the solution of the PF-ODE \cref{eq:plug-in ODE}. Let $\{\tilde{\bfx}_{\tau}\}_{\tau=\epsilon}^{T}$ to be the trajectory obtained by projecting $\{\bfx_{\tau}\}_{\tau=\epsilon}^{T}$ to the two-dimensional subspace spanned by its first two principal components. 
Given $\epsilon \le s < m < t \le T$ and a constant $c$, one step of the AMED-Solver is given by
\begin{equation}
    \label{eq:sup_amed}
    \bfx_{s}^A = \bfx_{t} + c (s - t) \bfeps_{\theta}(\bfx_{m}, m).
\end{equation}
Define the scaled logistic function to be 
\begin{equation}
    \label{eq:sup_logsitic}
    f(\tau) = a \left(\frac{1}{1+e^{-b\tau}} - \frac{1}{2}\right), \tau \in \bbR; a,b \in \bbR^+.
\end{equation}
Finally, define a SDE 
\begin{equation}
    \label{eq:sde_assump4}
    \rmd \bfz_\tau = g(\tau) \rmd \bfw_\tau, \tau \in [s, t]
\end{equation}
with initial value $\bf0$ at $t$ where $g(\tau)$ is a real-valued function and $\bfw_\tau \in \bbR^d$ is the standard Wiener process.

We start by the following assumptions:
\begin{assumption}
    \label{assump:assump1}
    Assume that there exists $a, b > 0$ s.t. $\left\|\bfx_{\tau}-\tilde{\bfx}_{\tau}\right\|_2 \le f(\tau), \tau \in [\epsilon, T]$.
\end{assumption}

For the choice of $a$ and $b$, in \cref{fig:sup_pca}, we calculate $\left\|\bfx_{\tau}-\tilde{\bfx}_{\tau}\right\|_2$ following the experiment settings in \cref{subsec:geometric}. We experimentally find that $\left\|\bfx_{\tau}-\tilde{\bfx}_{\tau}\right\|_2$ can be roughly upper bounded by setting $a=\sqrt{3d}/15$ and $b=3$.

\begin{assumption}
    \label{assump:assump2}
    Assume that there exists an integrable function $\bfomega: \bbR^{d+1} \to \bbR^d$ that generate $\{\tilde{\bfx}_{\tau}\}_{\tau=\epsilon}^{T}$ by
    \begin{equation}
        \label{eq:sup_pca_solution}
        \tilde{\bfx}_{s} = \tilde{\bfx}_{t} + \int_{t}^{s} \bfeps_{\theta}(\bfx_\tau, \tau) + \bfomega(\bfx_\tau, \tau)\rmd \tau.
    \end{equation}
    with initial value $\tilde{\bfx}_{T}$. In this way, we can decompose the integral in \cref{eq:solution} into two components that parallel and perpendicular to the plane where $\{\tilde{\bfx}_{\tau}\}_{\tau=\epsilon}^{T}$ lies, i.e.,
    \begin{equation}
    \begin{aligned}
        \int_{t}^{s} \bfeps_{\theta}(\bfx_\tau, \tau)\rmd \tau =
        & \int_{t}^{s} \bfeps_{\theta}(\bfx_\tau, \tau) \bfomega(\bfx_\tau, \tau)\rmd \tau \\
        & - \int_{t}^{s} \bfomega(\bfx_\tau, \tau)\rmd \tau.
    \end{aligned}
    \end{equation}
\end{assumption}

\begin{assumption}
    Decompose $\bfeps_{\theta}(\bfx_{m}, m)$ in \cref{eq:sup_amed} into two components as in Assumption \ref{assump:assump2} that parallel and perpendicular to the plane where $\{\tilde{\bfx}_{\tau}\}_{\tau=\epsilon}^{T}$ lies:
    \begin{equation}
        \bfeps_{\theta}(\bfx_{m}, m) = \bfeps_{\theta}^\parallel(\bfx_{m}, m) + \bfeps_{\theta}^\perp(\bfx_{m}, m).
    \end{equation}
    Assume that the parallel component is optimally learned and for the perpendicular component, we have
    \begin{equation}
        \left\| c(s-t)\bfeps_{\theta}^\perp(\bfx_{m}, m) \right\|_2 \le \left\| \int_{t}^{s} \bfomega(\bfx_\tau, \tau)\rmd \tau \right\|_2.
    \end{equation}
\end{assumption}

\begin{assumption}
    \label{assump:assump4}
    There exists such a $g(\tau)$ s.t. with high probability that
    \begin{equation}
        \label{eq:sup_bound}
        \left\| \int_{t}^{s} \bfomega(\tilde{\bfx}_\tau, \tau) \rmd \tau \right\|_2 \le \left\| \bfz_s \right\|_2.
    \end{equation}
\end{assumption}

\begin{lemma}
    \label{lemma:lemma1}
    Under assumption \ref{assump:assump1} and \ref{assump:assump4}, let $g(\tau) = f(\tau) / \sqrt{d}$, then $\bfz_s$ concentrates at a thin shell with radius 
    \begin{equation}
        r(s,t) = \frac{a}{\sqrt{b}}\sqrt{\frac{1}{1+e^{x}} \bigg|_{bs}^{bt} + \frac{b}{4}(t-s)}. 
    \end{equation}
\end{lemma}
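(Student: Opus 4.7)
\textbf{Proof plan for Lemma~\ref{lemma:lemma1}.} The plan is to reduce the statement to two essentially independent ingredients: (a) the fact that for a deterministic, scalar-valued diffusion coefficient the solution of \cref{eq:sde_assump4} is an isotropic Gaussian whose covariance is explicit, and (b) a routine high-dimensional concentration argument for the norm of such a Gaussian vector. The non-trivial bookkeeping is then just the antiderivative of $f(\tau)^2$.

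First, since $g$ is a deterministic real-valued function and $\bfw_\tau$ is a standard Wiener process in $\bbR^d$, the stochastic integral along $[s,t]$ starting from $\bfz_t=\mathbf{0}$ gives
\begin{equation*}
\bfz_s \;=\; -\int_s^t g(\tau)\, \rmd \bfw_\tau \;\sim\; \calN\!\bigl(\mathbf{0},\, \sigma_{s,t}^2\, \bfI_d\bigr), \qquad \sigma_{s,t}^2 := \int_s^t g(\tau)^2\, \rmd\tau,
\end{equation*}
because each coordinate is an independent It\^o integral of the deterministic integrand $g$, hence centered Gaussian with the stated variance, and the coordinates are uncorrelated. Substituting $g(\tau)=f(\tau)/\sqrt{d}$ gives $d\sigma_{s,t}^2 = \int_s^t f(\tau)^2\,\rmd\tau$, so the total variance $\mathrm{tr}(\mathrm{Cov}(\bfz_s)) = \int_s^t f(\tau)^2\,\rmd\tau$ is dimension-free.

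Second, I would compute this integral in closed form. The cleanest path is to notice that $\tfrac{1}{1+e^{-b\tau}}-\tfrac12 = \tfrac12\tanh(b\tau/2)$, so $f(\tau)^2 = \tfrac{a^2}{4}\tanh^2(b\tau/2) = \tfrac{a^2}{4}\bigl(1-\mathrm{sech}^2(b\tau/2)\bigr)$. Equivalently, one can verify directly that $\tfrac{\rmd}{\rmd\tau}\tfrac{1}{1+e^{b\tau}} = -\tfrac{b\,e^{b\tau}}{(1+e^{b\tau})^2}$ together with $\mathrm{sech}^2(b\tau/2) = \tfrac{4e^{b\tau}}{(1+e^{b\tau})^2}$ produces the antiderivative
\begin{equation*}
\int f(\tau)^2\, \rmd\tau \;=\; \frac{a^2}{4}\tau \;+\; \frac{a^2}{b(1+e^{b\tau})} \;+\; C.
\end{equation*}
Evaluating between $s$ and $t$ and factoring $a^2/b$ yields exactly $\int_s^t f(\tau)^2\,\rmd\tau = \tfrac{a^2}{b}\bigl[\tfrac{1}{1+e^x}\bigr|_{bs}^{bt} + \tfrac{b}{4}(t-s)\bigr] = r(s,t)^2$.

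Third, I would invoke the standard Gaussian thin-shell (equivalently, $\chi^2$-concentration) result: if $\bfz\sim\calN(\mathbf{0},\sigma^2\bfI_d)$, then $\|\bfz\|_2^2/\sigma^2$ is a $\chi^2_d$ random variable with mean $d$ and sub-exponential tails, so for any $\delta\in(0,1)$,
\begin{equation*}
\bbP\Bigl(\bigl|\,\|\bfz\|_2 - \sigma\sqrt{d}\,\bigr| > \delta\sigma\sqrt{d}\Bigr) \;\le\; 2\exp(-c\, d\, \delta^2)
\end{equation*}
for a universal constant $c>0$ (or derivable from Lipschitz Gaussian concentration applied to $\bfz\mapsto\|\bfz\|_2$). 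Applying this to $\bfz_s$ with $\sigma = \sigma_{s,t}$ gives $\|\bfz_s\|_2$ concentrated on a thin shell of radius $\sigma_{s,t}\sqrt{d} = \sqrt{\int_s^t f(\tau)^2\,\rmd\tau} = r(s,t)$, which is the claim.

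The only genuinely non-routine part is spotting the antiderivative that packages $\int f^2$ into the form stated in $r(s,t)$; the probabilistic content is essentially the observation that an It\^o integral of a deterministic scalar against an isotropic Wiener process is a calibrated isotropic Gaussian, plus textbook thin-shell concentration. The choice $g = f/\sqrt{d}$ is precisely what makes the radius dimension-independent and lets Assumption~\ref{assump:assump1} control the magnitude of $\|\bfz_s\|_2$ uniformly in $d$.
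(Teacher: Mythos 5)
Your proposal is correct and follows essentially the same route as the paper: identify $\bfz_s$ as a centered isotropic Gaussian with variance $\int_s^t g^2(\tau)\,\rmd\tau$ per coordinate, evaluate that integral in closed form to get $r(s,t)^2/d$, and invoke standard Gaussian norm concentration. The only difference is cosmetic — you verify the antiderivative of $f^2$ explicitly and state the thin-shell bound in relative form, whereas the paper states the integral's value directly and uses the additive deviation bound $\bbP\bigl(\bigl|\|\bfz_s\|_2-|\sigma(s,t)|\sqrt{d}\bigr|\ge h\bigr)\le 2e^{-ch^2}$.
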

\begin{proof}
    Since the SDE \cref{eq:sde_assump4} has zero drift coefficient, its perturbation kernel $p(\bfz_s|\bfz_t=\bf0)$ is a Gaussian with zero mean~\cite{sarkka2019applied}. The covariance $\mathbf{P}(s,t)$ is given by
    \begin{align}
        \mathbf{P}(s,t) 
        &= \int_s^t g^2(\tau) \rmd \tau \mathbf{I} \\
        &= \frac{a^2}{d} \int_s^t \left(\frac{1}{1+e^{-b\tau}} - \frac{1}{2}\right)^2 \rmd \tau \mathbf{I} \\
        &= \underbrace{\frac{a^2}{bd} \left( \frac{1}{1+e^{x}} \bigg|_{bs}^{bt} + \frac{b}{4}(t-s) \right)}_{{\sigma^2(s,t)}} \mathbf{I}.
    \end{align}
    By the well-known concentration of measure~\cite{vershynin2018high}, there exists a constant $c > 0$ s.t. for any $h \ge 0$, we have
    \begin{equation}
        \bbP\left(\left| \left\| \bfz_s \right\|_2 - \left|\sigma(s,t)\right| \sqrt{d}\right| \ge h\right) \le 2e^{-ch^2}
    \end{equation}
    which complete the proof.
\end{proof}

\begin{proposition}
    Given $\epsilon \le s < t \le T$, under the assumptions and \cref{lemma:lemma1} above, with high probability we have 
    \begin{equation}
        \left\|\bfx_{s}-\bfx_{s}^A\right\|_2 \le f(s) + f(t) + r(s, t).
    \end{equation}
\end{proposition}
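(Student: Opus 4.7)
My plan is to write the error $\bfx_s - \bfx_s^A$ as a telescoping sum that separates the in-plane contribution (which cancels under the optimal-learning hypothesis of Assumption 3) from the out-of-plane contribution (which is controlled by the SDE concentration bound). Concretely, I would insert $\tilde{\bfx}_s$ and $\tilde{\bfx}_t$ and regroup to obtain
\begin{equation*}
\bfx_s - \bfx_s^A = (\bfx_s - \tilde{\bfx}_s) - (\bfx_t - \tilde{\bfx}_t) + \bigl[\tilde{\bfx}_s - \tilde{\bfx}_t - c(s-t)\bfeps_{\theta}(\bfx_m, m)\bigr].
\end{equation*}
Assumption 1 immediately controls the first two terms in norm by $f(s)$ and $f(t)$, respectively.

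Next I would simplify the bracketed term. By Assumption 2, $\tilde{\bfx}_s - \tilde{\bfx}_t = \int_t^s [\bfeps_\theta(\bfx_\tau,\tau) + \bfomega(\bfx_\tau,\tau)]\rmd\tau$, which lies in the two-dimensional subspace by construction. Splitting $c(s-t)\bfeps_\theta(\bfx_m, m)$ into parallel and perpendicular components as in Assumption 3, the optimal-learning hypothesis identifies the in-plane piece with precisely the integral above, so the bracketed expression collapses to $-c(s-t)\bfeps_\theta^\perp(\bfx_m, m)$. Applying the triangle inequality and then the perpendicular-component bound in Assumption 3 yields
\begin{equation*}
\|\bfx_s - \bfx_s^A\|_2 \le f(s) + f(t) + \Bigl\|\int_t^s \bfomega(\bfx_\tau, \tau)\rmd\tau\Bigr\|_2.
\end{equation*}

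To close the proof I would convert the remaining norm into $r(s,t)$ by chaining Assumption 4 with Lemma 1: the former dominates the integral by $\|\bfz_s\|_2$ with high probability, and the latter concentrates $\|\bfz_s\|_2$ on the thin shell of radius $r(s,t)=\sigma(s,t)\sqrt{d}$. Intersecting the two high-probability events gives the claimed inequality. The main obstacle I anticipate is a small but real mismatch: Assumption 4 is stated in terms of $\bfomega(\tilde{\bfx}_\tau,\tau)$ while the decomposition actually produces $\bfomega(\bfx_\tau,\tau)$. I would reconcile this either by reading Assumption 4 along the true trajectory (a natural strengthening, since $\|\bfx_\tau-\tilde{\bfx}_\tau\|_2 \le f(\tau)$ is already small) or by a short Lipschitz-continuity argument on $\bfomega$ combined with Assumption 1. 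The remaining bookkeeping -- tracking the concentration constant from Lemma 1 and intersecting the two high-probability events -- is routine.
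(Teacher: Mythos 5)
Your proof is correct and follows essentially the same route as the paper: insert $\tilde{\bfx}_s$ and $\tilde{\bfx}_t$ via the triangle inequality to extract $f(s)+f(t)$, use Assumption 2 to express $\tilde{\bfx}_s-\tilde{\bfx}_t$ as the integral, cancel the parallel component by the optimal-learning hypothesis, bound the perpendicular remainder by Assumption 3, and finish with Assumption 4 and Lemma 1. The $\bfomega(\tilde{\bfx}_\tau,\tau)$ versus $\bfomega(\bfx_\tau,\tau)$ mismatch you flag is real but is equally present in the paper's own proof, which applies Assumption 4 to the integral along the true trajectory without comment; your proposed reconciliation is if anything more careful than the original.
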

\begin{proof}
    Under assumptions and \cref{lemma:lemma1} above, we have
    \begin{align}
        &\left\|\bfx_{s} - \bfx_{s}^A\right\|_2 \\
        &\le \left\|\bfx_{s} - \tilde{\bfx}_{s}\right\|_2 + \left\|\tilde{\bfx}_{s} - \bfx_{s}^A\right\|_2 \\
        &\le f(s) + \Bigl|\Bigl|\bfx_{t} - \tilde{\bfx}_{t} + c(s - t) \bfeps_{\theta}(\bfx_{m}, m) \\
        &\quad - \int_{t}^{s} \bfeps_{\theta}(\bfx_\tau, \tau) + \bfomega(\bfx_\tau, \tau)\rmd \tau\Bigl|\Bigl|_2 \\
        &\le f(s) + f(t) + \Bigl|\Bigl| c(s - t) \bfeps_{\theta}^\perp(\bfx_{m}, m) \\
        &\quad + c(s - t) \bfeps_{\theta}^\parallel(\bfx_{m}, m) - \int_{t}^{s} (\bfeps_{\theta}+\bfomega)(\bfx_\tau, \tau)\rmd \tau\Bigl|\Bigl|_2 \\
        &\le f(s) + f(t) + \left\|c(s - t)\bfeps_{\theta}^\perp(\bfx_{m}, m)\right\|_2 \\
        &\le f(s) + f(t) + \left\| \int_{t}^{s} \bfomega(\bfx_\tau, \tau)\rmd \tau \right\|_2 \\
        &\le f(s) + f(t) + r(s, t)
    \end{align}
    with high probability.
\end{proof}

\begin{figure*}
  \centering
  \begin{subfigure}[b]{0.48\linewidth}
      \includegraphics[width=\linewidth]{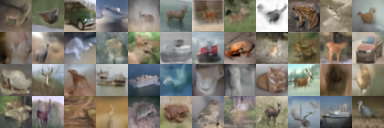}
      \caption{Baseline iPNDM solver. FID = 47.98.}
  \end{subfigure}
  \begin{subfigure}[b]{0.48\linewidth}
      \includegraphics[width=\linewidth]{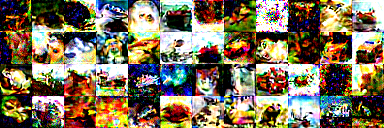}
      \caption{Baseline DPM-Solver-2. FID = 155.7.}
  \end{subfigure}
  \begin{subfigure}[b]{0.48\linewidth}
      \includegraphics[width=\linewidth]{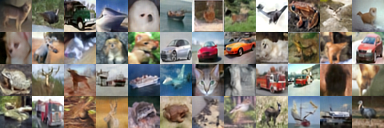}
      \caption{AMED-Plugin applied on iPNDM solver. FID = 10.81.}
  \end{subfigure}
  \begin{subfigure}[b]{0.48\linewidth}
      \includegraphics[width=\linewidth]{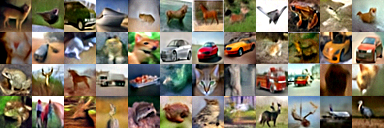}
      \caption{AMED-Solver. FID = 18.49.}
  \end{subfigure}
  \caption{Uncurated samples on CIFAR10 32$\times$32 with 3 NFE.}
  \label{fig:sup_grid_cifar10_3}
\end{figure*}

\begin{figure*}
  \centering
  \begin{subfigure}[b]{0.48\linewidth}
      \includegraphics[width=\linewidth]{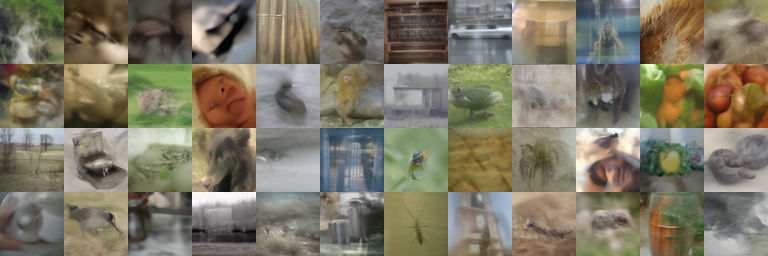}
      \caption{Baseline iPNDM solver. FID = 58.53.}
  \end{subfigure}
  \begin{subfigure}[b]{0.48\linewidth}
      \includegraphics[width=\linewidth]{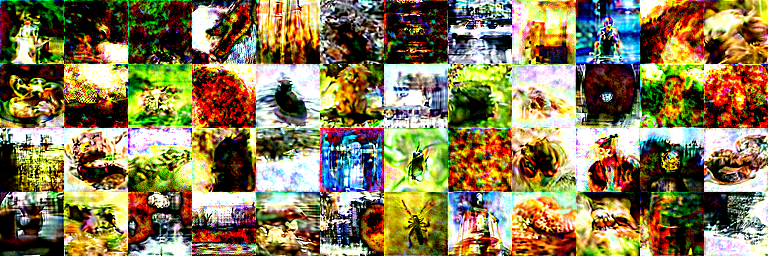}
      \caption{Baseline DPM-Solver-2. FID = 140.2.}
  \end{subfigure}
  \begin{subfigure}[b]{0.48\linewidth}
      \includegraphics[width=\linewidth]{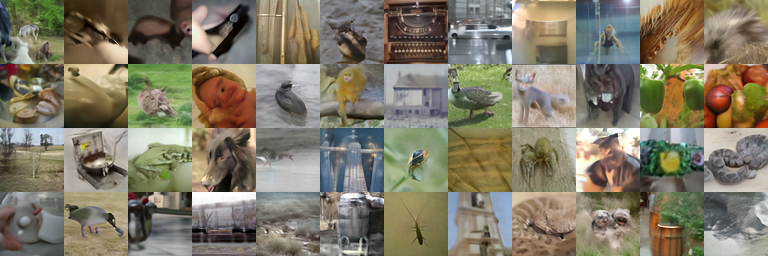}
      \caption{AMED-Plugin applied on iPNDM solver. FID = 28.06.}
  \end{subfigure}
  \begin{subfigure}[b]{0.48\linewidth}
      \includegraphics[width=\linewidth]{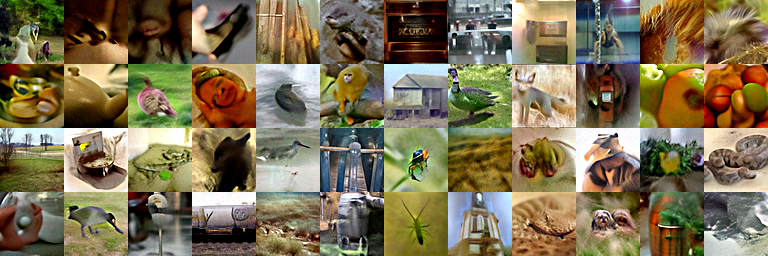}
      \caption{AMED-Solver. FID = 38.10.}
  \end{subfigure}
  \caption{Uncurated samples on Imagenet 64$\times$64 with 3 NFE.}
  \label{fig:sup_grid_imagenet64_3}
\end{figure*}

\begin{figure*}
  \centering
  \begin{subfigure}[b]{0.48\linewidth}
      \includegraphics[width=\linewidth]{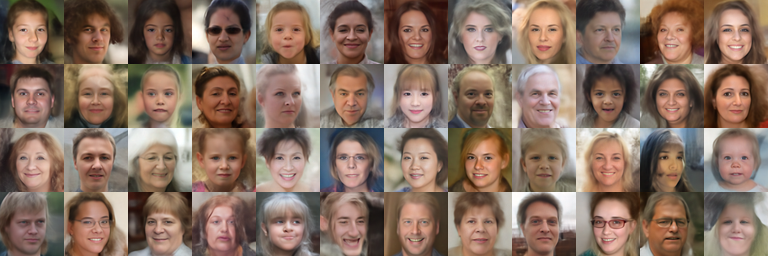}
      \caption{Baseline iPNDM solver. FID = 45.98.}
  \end{subfigure}
  \begin{subfigure}[b]{0.48\linewidth}
      \includegraphics[width=\linewidth]{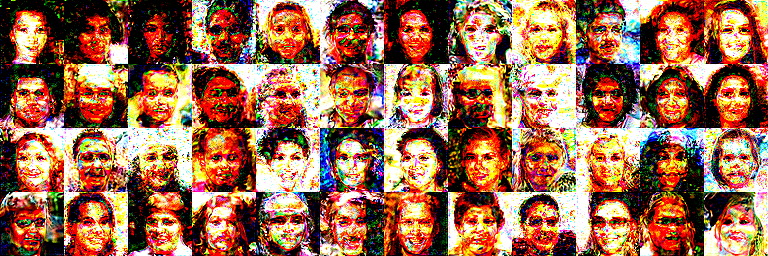}
      \caption{Baseline DPM-Solver-2. FID = 266.0.}
  \end{subfigure}
  \begin{subfigure}[b]{0.48\linewidth}
      \includegraphics[width=\linewidth]{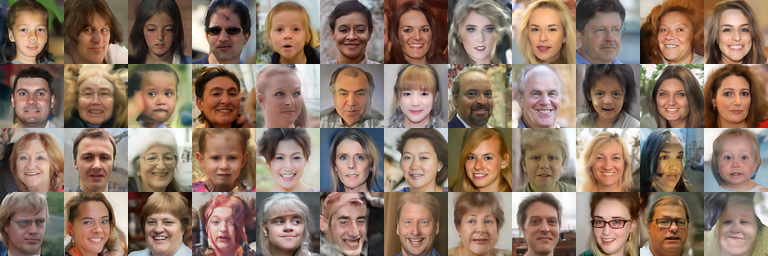}
      \caption{AMED-Plugin applied on iPNDM solver. FID = 26.87.}
  \end{subfigure}
  \begin{subfigure}[b]{0.48\linewidth}
      \includegraphics[width=\linewidth]{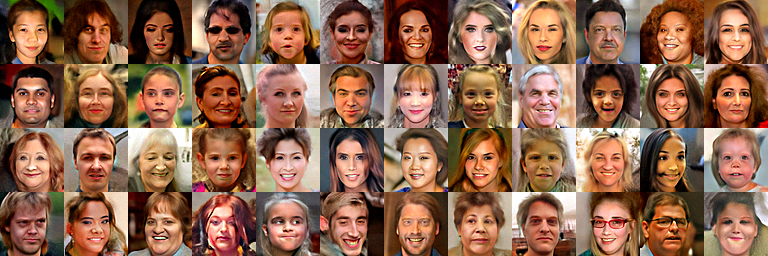}
      \caption{AMED-Solver. FID = 47.31.}
  \end{subfigure}
  \caption{Uncurated samples on FFHQ 64$\times$64 with 3 NFE.}
  \label{fig:sup_grid_ffhq64_3}
\end{figure*}
\clearpage

\begin{figure*}
  \centering
  \begin{subfigure}[b]{0.48\linewidth}
      \includegraphics[width=\linewidth]{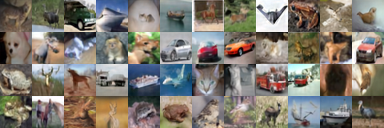}
      \caption{Baseline iPNDM solver. FID = 13.59.}
  \end{subfigure}
  \begin{subfigure}[b]{0.48\linewidth}
      \includegraphics[width=\linewidth]{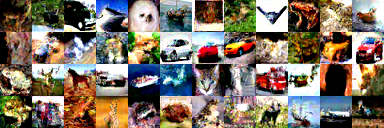}
      \caption{Baseline DPM-Solver-2. FID = 57.30.}
  \end{subfigure}
  \begin{subfigure}[b]{0.48\linewidth}
      \includegraphics[width=\linewidth]{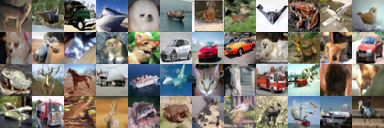}
      \caption{AMED-Plugin applied on iPNDM solver. FID = 6.61.}
  \end{subfigure}
  \begin{subfigure}[b]{0.48\linewidth}
      \includegraphics[width=\linewidth]{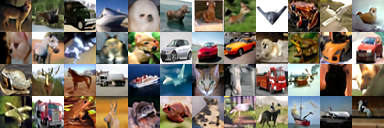}
      \caption{AMED-Solver. FID = 7.59.}
  \end{subfigure}
  \caption{Uncurated samples on CIFAR10 32$\times$32 with 5 NFE.}
  \label{fig:sup_grid_cifar10_5}
\end{figure*}

\begin{figure*}
  \centering
  \begin{subfigure}[b]{0.48\linewidth}
      \includegraphics[width=\linewidth]{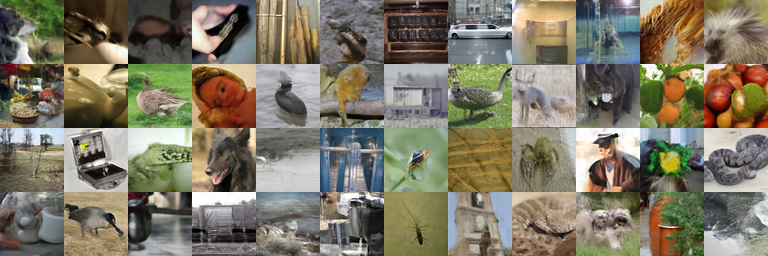}
      \caption{Baseline iPNDM solver. FID = 18.99.}
  \end{subfigure}
  \begin{subfigure}[b]{0.48\linewidth}
      \includegraphics[width=\linewidth]{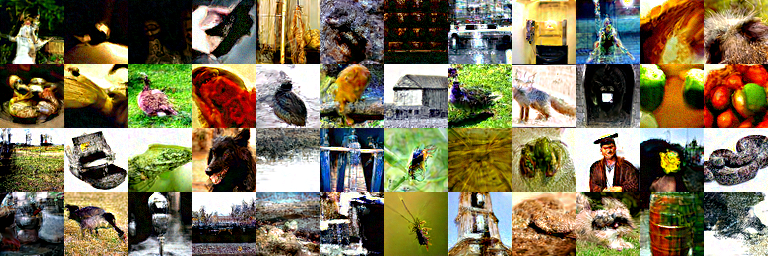}
      \caption{Baseline DPM-Solver-2. FID = 42.41.}
  \end{subfigure}
  \begin{subfigure}[b]{0.48\linewidth}
      \includegraphics[width=\linewidth]{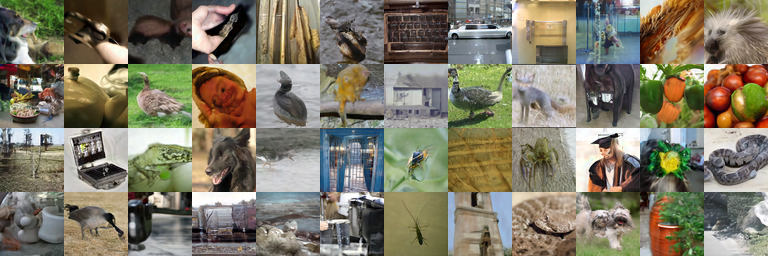}
      \caption{AMED-Plugin applied on iPNDM solver. FID = 13.83.}
  \end{subfigure}
  \begin{subfigure}[b]{0.48\linewidth}
      \includegraphics[width=\linewidth]{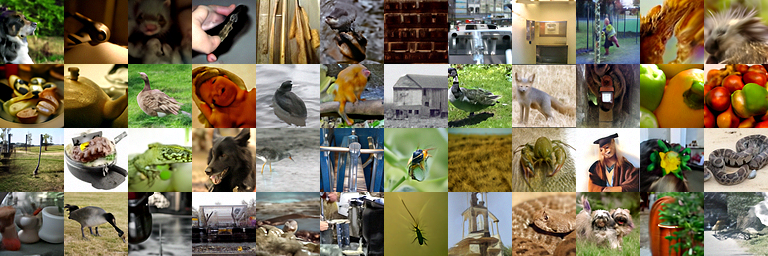}
      \caption{AMED-Solver. FID = 10.74.}
  \end{subfigure}
  \caption{Uncurated samples on Imagenet 64$\times$64 with 5 NFE.}
  \label{fig:sup_grid_imagenet64_5}
\end{figure*}

\begin{figure*}
  \centering
  \begin{subfigure}[b]{0.48\linewidth}
      \includegraphics[width=\linewidth]{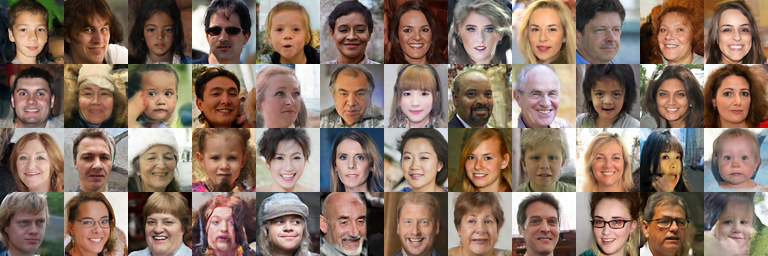}
      \caption{Baseline iPNDM solver. FID = 17.17.}
  \end{subfigure}
  \begin{subfigure}[b]{0.48\linewidth}
      \includegraphics[width=\linewidth]{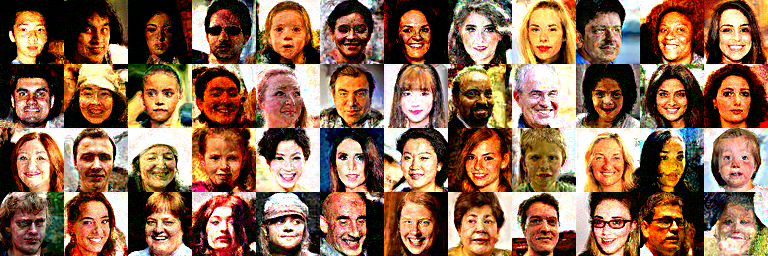}
      \caption{Baseline DPM-Solver-2. FID = 87.10.}
  \end{subfigure}
  \begin{subfigure}[b]{0.48\linewidth}
      \includegraphics[width=\linewidth]{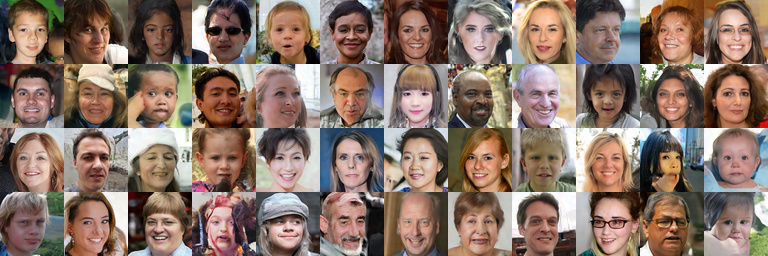}
      \caption{AMED-Plugin applied on iPNDM solver. FID = 12.49.}
  \end{subfigure}
  \begin{subfigure}[b]{0.48\linewidth}
      \includegraphics[width=\linewidth]{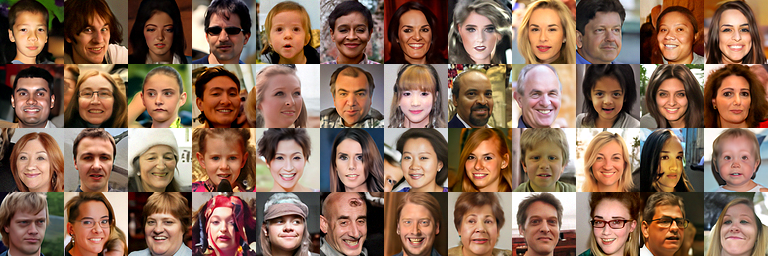}
      \caption{AMED-Solver. FID = 14.80.}
  \end{subfigure}
  \caption{Uncurated samples on FFHQ 64$\times$64 with 5 NFE.}
  \label{fig:sup_grid_ffhq64_5}
\end{figure*}
\clearpage

\end{document}